\documentclass[sigconf]{acmart}

\usepackage{adjustbox}
\usepackage{amsthm}
\usepackage[linesnumbered,ruled,vlined]{algorithm2e}
\usepackage[inline, shortlabels]{enumitem}
\usepackage{graphicx}
\usepackage{hyperref}
\usepackage{cleveref} 
\usepackage{subcaption}
\usepackage[HTML]{xcolor}
\usepackage{balance}

\theoremstyle{plain}
\newtheorem{example}{Example}
\newtheorem{theorem}{Theorem}
\newtheorem{corollary}{Corollary}

\usepackage{colortbl}
\definecolor{bestcell}{HTML}{D6E6F2}   
\newcommand{\best}[1]{\cellcolor{bestcell}\textbf{#1}}

\definecolor{lightblue}{HTML}{C8E7FA}
\definecolor{lightorange}{HTML}{FBD6B7}

\DeclareMathOperator*{\argmax}{arg\,max}
\DeclareMathOperator{\E}{\mathbb{E}}

\def\showcomments{1}
\newcommand{\KH}[1]{\if\showcomments1\textcolor{violet}{(KH: #1)}\fi}
\newcommand{\CZ}[1]{\if\showcomments1\textcolor{blue}{(CZ: #1)}\fi}
\newcommand{\BS}[1]{\if\showcomments1\textcolor{orange}{(BS: #1)}\fi}

\AtBeginDocument{%
  }

\setcopyright{acmlicensed}
\copyrightyear{2026}
\acmYear{2026}
\setcopyright{cc}
\setcctype{by}
\acmConference[KDD '26]{Proceedings of the 32nd ACM SIGKDD Conference on Knowledge Discovery and Data Mining V.2}{August 09--13, 2026}{Jeju Island, Republic of Korea}
\acmBooktitle{Proceedings of the 32nd ACM SIGKDD Conference on Knowledge Discovery and Data Mining V.2 (KDD '26), August 09--13, 2026, Jeju Island, Republic of Korea}
\acmDOI{10.1145/3770855.3817882}
\acmISBN{979-8-4007-2259-2/2026/08}

\begin{document}

\title{Adaptive Exploration for Latent-State Bandits}

\author{Jikai Jin}
\orcid{0009-0004-7988-4650}
\affiliation{
  \department{The Institute for Computational and Mathematical Engineering}
  \institution{Stanford University}
  \city{Stanford}
  \state{CA}
  \country{USA}
}
\email{jkjin@stanford.edu}

\author{Kenneth Hung}
\orcid{0000-0002-3911-182X}
\affiliation{
  \department{Ads Online Experimentation}
  \institution{Meta Platforms, Inc.}
  \city{Menlo Park}
  \state{CA}
  \country{USA}
}
\email{kenhung@meta.com}

\author{Sanath Kumar Krishnamurthy}
\orcid{0009-0006-1197-5358}
\affiliation{
  \department{Ranking AI Research}
  \institution{Meta Platforms, Inc.}
  \city{Menlo Park}
  \state{CA}
  \country{USA}
}
\email{sanathsk@meta.com}

\author{Baoyi Shi}
\orcid{0009-0009-7362-0714}
\affiliation{
  \department{Central Applied Science}
  \institution{Meta Platforms, Inc.}
  \city{Menlo Park}
  \state{CA}
  \country{USA}
}
\email{baoyis@meta.com}

\author{Congshan Zhang}
\orcid{0009-0005-2815-7047}
\affiliation{
  \department{Ads Online Experimentation}
  \institution{Meta Platforms, Inc.}
  \city{Menlo Park}
  \state{CA}
  \country{USA}
}
\email{cszhang@meta.com}

\renewcommand{\shortauthors}{Jin et al.}

\begin{abstract}
    We study bandits whose rewards depend on an unobserved Markov state that evolves independently of the learner's actions. The optimal arm can change even though the learner observes only past actions and rewards. We propose algorithms that feed LinUCB with two summaries of the hidden state: a lagged action-reward pair and, when available, a probe fingerprint formed from rewards of multiple arms. The adaptive variants refresh the fingerprint using residual, margin, and staleness tests. In synthetic stress tests over state count, transition rate, noise, and horizon, these methods reduce dynamic regret relative to standard, adversarial, and non-stationary bandit baselines when the summaries distinguish states and are updated often enough. Ablations and misspecification tests identify the main failure modes: weak fingerprint separation, high noise, and state changes during sequential probes.

\end{abstract}

\begin{CCSXML}
<ccs2012>
   <concept>
       <concept_id>10010147.10010257.10010321</concept_id>
       <concept_desc>Computing methodologies~Machine learning algorithms</concept_desc>
       <concept_significance>500</concept_significance>
       </concept>
   <concept>
       <concept_id>10002950.10003648.10003662</concept_id>
       <concept_desc>Mathematics of computing~Probabilistic inference problems</concept_desc>
       <concept_significance>300</concept_significance>
       </concept>
 </ccs2012>
\end{CCSXML}

\ccsdesc[500]{Computing methodologies~Machine learning algorithms}
\ccsdesc[300]{Mathematics of computing~Probabilistic inference problems}

\keywords{Contextual bandit; Non-stationary bandit; Confounding variables; Causal inference}


\maketitle

\section{Introduction}
The multi-armed bandit problem models sequential decision-making with applications in online advertising \citep{chen2013combinatorial}, recommendation systems \citep{li2010contextual}, clinical trials \citep{durand2018contextual}, and resource allocation \citep{henderson2022beyond}. Classical bandit algorithms typically assume that reward distributions are constant or depend only on observable contextual features. We study a non-stationary setting with hidden time-varying states that
\begin{enumerate*}
    \item significantly influence the action-reward mapping,
    \item transition autonomously with no influence from the action taken, but
    \item remain unobservable to the decision-maker.
\end{enumerate*}
In this setting, the reward-maximizing action depends on latent environmental conditions inferred from past observations. \Cref{fig:dag} shows a directed acyclic graph illustrating the relationship among the hidden states, actions, and rewards.

\begin{figure}
    \centering
    \includegraphics[width=0.85\linewidth]{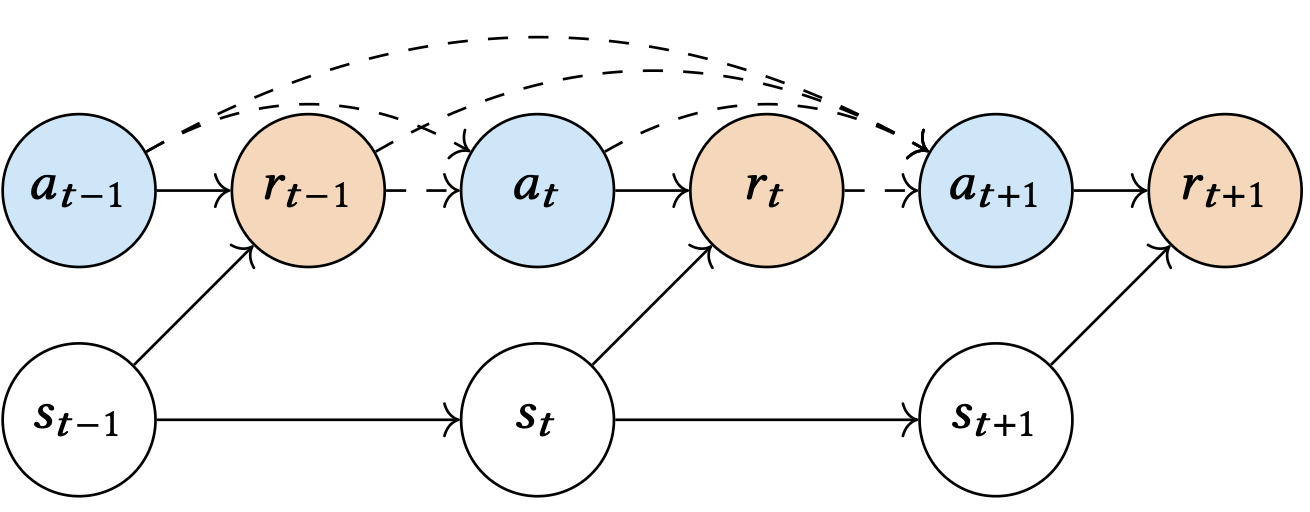}
    \Description[A directed acyclic graph (DAG) representing the causal relationship among hidden state $s_t$, action $a_t$ and reward $r_t$.]{A directed acyclic graph (DAG) representing the causal relationship among hidden state $s_t$, action $a_t$ and reward $r_t$, the relationship between action and reward as in usual bandit literature where $a_t \to r_t$, with the feedback from past actions and rewards to the current action $a_t$. However, state $s_t$ also affects $r_t$, and evolves autonomously as a Markov chain.}
    \caption{A directed acyclic graph (DAG) representing the causal relationship among hidden state $s_t$, action $a_t$ and reward $r_t$. The solid arrows represent the latent-state bandit setting we have, while the dashed arrows represent the feedback from past actions and rewards due to the bandit algorithm.}
    \label{fig:dag}
\end{figure}

As a motivating example, consider an online advertising platform that must repeatedly select which ad to display. While the platform observes user demographics and browsing history, it cannot directly observe broad environmental variables (e.g., the weather) or the user's current cognitive state (e.g., whether they are focused or distracted, browsing casually or ready to purchase). These hidden states affect ad effectiveness and evolve over time: a user might transition from ``browsing'' to ``purchase-ready'', or their attention might fluctuate based on the weather. An algorithm that ignores these latent dynamics will misestimate ad performance and make suboptimal decisions.

The example highlights a limitation of standard contextual bandits: they condition on observed features, whereas the relevant state variables may be unobserved. This leads to \textit{systematic confounding}, where the algorithm's reward estimates are biased by unobserved factors, resulting in sub-optimal long-term performance.

\subsection{Key Challenges}
The presence of unobserved time-varying confounders creates three technical challenges:

\begin{enumerate}
    \item Confounded reward estimation: Standard bandit algorithms estimate arm values by averaging observed rewards, but when rewards depend on hidden states, these estimates reflect a mixture over state distributions rather than state-conditional values. This leads to systematic bias when the optimal action varies across states --- standard algorithms will converge to an arm that is mediocre in all states but optimal in none.
    \item Temporal reward attribution: When rewards depend on hidden states that evolve over time, observed performance changes may result from either learning progress or state transitions. Disentangling these effects complicates the exploration exploitation trade-off: a reward increase after an exploratory action may reflect either parameter learning or a favorable state transition, while a reward decrease may reflect the reverse.
    \item Limited state information: Unlike fully observable contextual bandits, algorithms cannot directly condition on state information and must rely on indirect signals to infer the current state. This task becomes increasingly difficult as the state space grows.
\end{enumerate}

\subsection{Our Contributions}
Hidden states leave traces in the observed reward sequence. We use these traces to build low-dimensional statistics for state-dependent arm selection, while avoiding explicit estimation of the latent Markov chain. Concretely, develop a family of bandit algorithms that track the dynamically optimal arm without estimating the Markov transition model, provided that the resulting statistics separate the relevant states and remain current. Our approach combines two complementary strategies: 
\begin{enumerate}
    \item Lagged contextual learning: Previous rewards and actions serve as informative proxies for hidden states. Our \textbf{Lagged-context UCB (LC-UCB)} algorithm treats the action-reward pair $(r_{t-1},a_{t-1})$ as contextual features, enabling implicit state tracking via observable interaction history.
    \item Coordinated probing: Motivated by an arm identification issue of LC-UCB (see \Cref{ex:state-ambiguity}), we develop probing strategies that generate contextual signals by regularly exploring different arms in coordinated patterns. This strategy reveals cross-arm relationships and reward structures that would remain hidden under LC-UCB. We introduce \textbf{Randomized Probing UCB (RP-UCB)} for settings with multiple experimental units that can be simultaneously assigned different arms, \textbf{Sequential Probing UCB (SP-UCB)} for single-unit settings, using sequential arm sampling.
\end{enumerate}

These two approaches can be combined: lagged context provides continuous state tracking, while coordinated probing supplies the joint information needed to disambiguate state-dependent reward patterns. We therefore develop adaptive probing schemes that refresh probe information only when the current summary appears unreliable, leading to \textbf{Adaptive Randomized Probing UCB (AdaRP-UCB)} and \textbf{Adaptive Sequential Probing UCB (AdaSP-UCB)}.

The algorithms keep the classical bandit interface: they construct features from observed interaction data and pass them to a contextual learner. This keeps computation close to LinUCB while adding state information needed for structured non-stationarity.

We conduct systematic empirical evaluation across parameter sweeps --- varying the number of hidden states, environmental volatility, reward noise, and time horizons. The adaptive probing methods improve over classical, adversarial, and non-stationary bandit baselines when the summaries remain discriminative and current, with the largest gains in environments where the optimal arm changes often enough to make timely updates valuable.

\subsection{Related Work}

\paragraph{Classical and Contextual Bandits}
The multi-armed bandit problem, introduced by \citet{robbins1952some}, includes foundational algorithms like UCB1 \citep{auer2002finite} and Thompson Sampling \citep{thompson1933likelihood,russo2018tutorial}. These classical methods assume stationary reward distributions, which are violated in our setting. Contextual bandits \citep{li2010contextual,beygelzimer2011contextual} like LinUCB require all relevant features to be observable. Recent work on bandits with unobserved confounders \citep{bareinboim2015bandits,forney2017counterfactual,liao2024instrumental} typically assumes access to instrumental variables or structural knowledge unavailable here.

\paragraph{Doubly-robust approaches}
While intended to reduce bias, doubly-robust approaches \citep{dimakopoulou2019balanced,kim2021doubly} also sever feedback-induced confounding ($s_{t-1} \to r_{t-1} \to a_t$ and $s_{t-1} \to s_t \to r_t$ in \Cref{fig:dag}). However, they often introduce high variance through inverse propensity weights and estimate rewards averaged over a state distribution rather than adapting to the current hidden state.

\paragraph{Non-stationary and adversarial bandit}
Non-stationary algorithms use sliding windows or discounting \citep{besbes2019optimal,garivier2011upper,cao2019nearly} to treat changes as arbitrary drift. Semiparametric variants \citep{krishnamurthy2018semiparametric} allow non-stationarity but lack the flexibility to handle shifts in arm rankings or gaps. Adversarial algorithms like EXP3 \citep{auer2002nonstochastic} target the single retrospective best arm, which leads to overly conservative exploration when aiming for dynamic regret. We focus instead on a structured Markovian state determining the reward mapping.

\paragraph{Restless bandit and reinforcement learning}
Restless bandits \citep{whittle1988restless,liu2010indexability} model Markovian dynamics, but assume observable states and known, action-dependent transitions. In reinforcement learning, Partially observable MDPs (POMDPs) \citep{kaelbling1998planning,pineau2006anytime} assume decodability \citep{azizzadenesheli2016reinforcement,jin2020sample,efroni2022provable,xie2024learning} to distinguish hidden states, which does not hold here. Furthermore, POMDP transitions are action-dependent, whereas our state evolution is autonomous. Contextual bandits in Markov environments \citep{hallak2015contextual,modi2018markov} or information-theoretic approaches \citep{hao2022regret,arumugam2022deciding} similarly require full observability or sufficient statistics.

\paragraph{Latent Markov decision processes}
Highly relevant is work on Latent Markov decision processes (LMDPs) \citep{kwon2021reinforcement,kwon2023reward,kwon2024rl}. However, LMDPs assume an episodic structure with contexts fixed throughout an episode. Our setting is a single continuous interaction sequence: the hidden state may change within the sequence, and the learner is not given a sufficient statistic. The algorithms use lightweight history and probe summaries in place of a learned latent transition model. We summarize these differences in \Cref{tbl:comparison}.

\section{Setup and Notations}
\paragraph{Latent-state bandit model}

We consider a non-stationary multi-armed bandit problem with $K$ arms, $S$ hidden states, and a time horizon of $T$ rounds. Let $\mathcal S$ denote the hidden-state set and index arms by $\mathcal A=\{0,\ldots,K-1\}$ throughout. At each round $t = 1, 2, \ldots, T$, the environment occupies an unobservable hidden state $s_t \in \mathcal S$. The hidden state evolves according to a Markov chain with transition matrix $\mathbf{P}$, where $P_{ij} = \mathbb{P}(s_{t+1} = j \mid s_t = i)$ denotes the probability of transitioning from state $i$ to state $j$. We assume the chain is ergodic with stationary distribution $\boldsymbol{\pi}^*$.

Upon selecting an arm $a_t \in \mathcal A$, the agent observes a stochastic reward
\[
    r_t = \mu_{s_t, a_t} + \eta_t,
\]
where $\mu_{s,a} \in [0,1]$ is the mean reward of arm $a$ in state $s$ and $\eta_t$ is mean zero with variance $\sigma^2$. The agent never observes the state $s_t$ directly --- only the scalar reward $r_t$.

\paragraph{Dynamic regret}

For each state $s$, define an optimal arm $a_s^*\in\argmax_{a\in\mathcal{A}}\mu_{s,a}$ and gaps $\Delta_{s,a} \coloneq \mu_{s,a_s^*}-\mu_{s,a}\ge 0$. Let $\Delta_{\max} \coloneq \max_{s\in\mathcal{S},\,a\in\mathcal{A}}\Delta_{s,a}\le 1$ denote the maximum single-step regret. The (single-unit) dynamic regret against the state-aware oracle is
\[
    R_T \coloneq \sum_{t=1}^T\Delta_{s_t,a_t}.
\]
For any history statistic $Z_t$, let $\pi_t^Z(Z_t)$ be the arm chosen by the best rule that observes only $Z_t$. The regret of an algorithm $A$ then decomposes as
\[
    R_T(A)
    =
    \underbrace{\sum_{t=1}^T
    \big(\mu_{s_t,a^*_{s_t}}-\mu_{s_t,\pi_t^Z(Z_t)}\big)}_{\mathrm{App}_T(Z)}
    +
    \underbrace{\sum_{t=1}^T
    \big(\mu_{s_t,\pi_t^Z(Z_t)}-\mu_{s_t,a_t}\big)}_{\mathrm{Reg}_T^Z(A)}.
\]
Here $\mathrm{App}_T(Z)$ is the regret of this representation oracle against the state-aware oracle, while $\mathrm{Reg}_T^Z(A)$ is the learner's regret against the representation oracle. This separates state ambiguity from learner regret before introducing lagged context and probing.

\paragraph{Notations}

Our algorithms construct context features from past observations to enable state inference. We write $\boldsymbol{\phi}_t \in \mathbb{R}^d$ for the context vector at round $t$, where the specific construction depends on the algorithm. For LC-UCB, we could use the lagged context $\boldsymbol{\phi}_t = (a_{t-1}, r_{t-1})$, concatenating the previous action\footnote{or, where appropriate, a one-hot encoding thereof to apply LinUCB} with the previous reward. For probing algorithms (RP-UCB and its variants), we additionally incorporate joint observations $(r_0^{\mathrm{prev}}, r_1^{\mathrm{prev}})$ obtained by sampling both arms.

All algorithms build upon the LinUCB algorithm \citep{li2010contextual}. For each arm $a\in\mathcal A$, we maintain a precision matrix $\mathbf{A}_a \in \mathbb{R}^{d \times d}$, a reward-weighted feature vector $\mathbf{b}_a \in \mathbb{R}^d$, and compute parameter estimates $\hat{\boldsymbol{\theta}}_a = \mathbf{A}_a^{-1} \mathbf{b}_a$. Arm selection follows the upper confidence bound principle: at each round, we select $a_t = \arg\max_{a\in\mathcal A} \mathrm{UCB}_{t,a}$, where $\mathrm{UCB}_{t,a} = \boldsymbol{\phi}_t^\top \hat{\boldsymbol{\theta}}_a + \alpha \sqrt{\boldsymbol{\phi}_t^\top \mathbf{A}_a^{-1} \boldsymbol{\phi}_t}$ and $\alpha > 0$ controls exploration. The regularization parameter $\lambda > 0$ initializes the precision matrices as $\mathbf{A}_a = \lambda \mathbf{I}$.

For fixed-schedule probing, the parameter $\tau$ specifies the probing frequency. For adaptive probing, we use thresholds $z_{\mathrm{thresh}}$ (residual gate), $m_{\mathrm{thresh}}$ (uncertainty gate), and $(\lambda_h, \delta_h)$ (hazard gate), along with a minimum inter-probe gap $\tau_{\min}$. \Cref{tbl:notation} summarizes the key symbols for reference.

\begin{table}[hbtp]
\centering
\begin{tabular}{cl}
\toprule
\textbf{Symbol} & \textbf{Description} \\
\midrule
$K$ & Number of arms \\
$S$ & Number of hidden states \\
$T$ & Time horizon \\
$s_t$ & Hidden state at round $t$ \\
$a_t$ & Arm selected at round $t$ \\
$r_t$ & Reward at round $t$ \\
$\mu_{s,a}$ & Expected reward (state $s$, arm $a$) \\
$\mathbf{P}$ & State transition matrix \\
$\boldsymbol{\phi}_t$ & Context vector at round $t$ \\
$\mathbf{A}_a$ & Precision matrix for arm $a$ \\
$\hat{\boldsymbol{\theta}}_a$ & Parameter estimate for arm $a$ \\
$\mathrm{UCB}_{t,a}$ & Upper confidence bound \\
$\alpha$ & Exploration parameter \\
$\lambda$ & Regularization parameter \\
$\tau$ & Probing frequency \\
$\tau_{\min}$ & Minimum inter-probe gap \\
\bottomrule
\end{tabular}
\vspace{4pt}
\caption{Summary of key notation.}
\label{tbl:notation}
\end{table}

\vspace{-30pt}

\section{Lagged Action-Reward as Context}
The main challenge in our setting is that reward distributions depend on hidden states that evolve over time. Since direct state observation is impossible, we must develop algorithms that infer state information indirectly from observable quantities. This section introduces LC-UCB in \Cref{alg:lc-ucb}, which utilizes a simple but powerful insight: under Markovian state dynamics, recent rewards carry information about the current hidden state.

Consider the information available to the agent at time $t$. While the current state $s_t$ is unobservable, the agent has access to the previous action $a_{t-1}$ and the resulting reward $r_{t-1}$. Since the reward $r_{t-1}$ was generated from a conditional distribution fully determined by $a_{t-1}$ and $s_{t-1}$, the previous action-reward pair $(a_{t-1}, r_{t-1})$ provides information about $s_{t-1}$. If the hidden state evolves as a Markov chain slowly and the mixing time is long, the previous state $s_{t-1}$ is highly informative of the current state $s_t$. Therefore, the previous action-reward pair $(a_{t-1}, r_{t-1})$ acts as an informative proxy about the current state $s_t$, and we can use the pair as a contextual feature and applying contextual bandit algorithms to learn state-dependent reward mappings.

\begin{algorithm}[hbtp]
\caption{Lagged-Context Upper Confidence Bound (LC-UCB)}
\label{alg:lc-ucb}
\KwData{$\alpha > 0$ (exploration parameter), $\lambda > 0$ (regularization), $T$ (time horizon), $K$ (number of arms)}
\For{$a \gets 0, 1, \ldots, K-1$}{
    Initialize LinUCB model $\mathcal{M}_a$ with exploration parameter $\alpha$ and regularization $\lambda$\;
}
$(a_0, r_0) \gets (0, 0)$\tcp*{Initialize previous action and reward}
\For{$t \gets 1, 2, \ldots, T$}{
    $\boldsymbol{\phi}_t \gets (a_{t-1}, r_{t-1})$\;
    \For{$a \gets 0, 1, \ldots, K-1$}{
        Compute $\mathrm{UCB}_{t,a}$ using LinUCB model $\mathcal{M}_a$ with context $\boldsymbol{\phi}_t$\;
    }
    $a_t \gets \argmax_{a\in\mathcal A} \text{UCB}_{t,a}$\tcp*{Select arm}
    Play action $a_t$ and observe reward $r_t$\;
    Update LinUCB model $\mathcal{M}_{a_t}$ with $(\boldsymbol{\phi}_t, r_t)$\;
}
\end{algorithm}

\subsection{Intuition}

LC-UCB differs from UCB1 in the target it estimates.

Specifically, UCB1 maintains empirical means
\[
    \hat\mu_{t,a} = \frac{1}{N_{t,a}} \sum_{i=1}^t r_i \mathbf{1}_{a_i = a}, \quad \text{where } N_{t,a} = \#\{i: 1 \le i \le t \text{ and } a_i = a\},
\]
for each arm $a$, which simply neglects the existence of an evolving state $s_t$. Since the hidden state Markov chain is ergodic, the state visiting distribution converges to some $\pi_{\mathrm{UCB}}^*$. So as $t \to \infty$, the empirical means $\hat\mu_{a,t}$ should converge to $\E_{a \sim \pi_{\mathrm{UCB}}^*(s)} \mu_{s,a}$ and UCB1 effectively solves
\[
    a^{\mathrm{UCB1}} = \argmax_{a\in\mathcal A} \E_{a \sim \pi_{\mathrm{UCB}}^*(s)} \mu_{s,a}.
\]
This finds the optimal arm under a certain state distribution, which may be suboptimal in individual states.

By contrast, for each arm $a$, LC-UCB collects data
\[
    \mathcal{D}_a = \{(a_{t-1}, r_{t-1}, r_t): a_t = a\}.
\]
The algorithm proceeds to learn a mapping from the context $(a_{t-1},\allowbreak r_{t-1})$ to $r_t$. Now
\begin{multline}
    \E[r_t \mid a_{t-1} = a', r_{t-1} = r'] \\
    = \sum_s \mathbb{P}[s_t = s \mid a_{t-1} = a', r_{t-1} = r'] \cdot \mu_{s,a}.
\label{eq:expand-cond-expectation}
\end{multline}
For a Markov chain with slow mixing time, we would expect
\[
    \mathbb{P}[s_t = s \mid a_{t-1} = a', r_{t-1} = r'] \approx \mathbb{P}[s_{t-1} = s \mid a_{t-1} = a', r_{t-1} = r'].
\]
Now by Bayes rule, we have
\begin{multline*}
    \mathbb{P}[s_{t-1} = s \mid a_{t-1} = a', r_{t-1} = r'] \\
    \propto \pi_{\mathrm{LC-UCB}}^*(s, a') \cdot \mathbb{P}[r_{t-1} = r' \mid s_{t-1} = s, a_{t-1} = a']
\end{multline*}
where $\pi_{\mathrm{LC-UCB}}^*(s', a')$ is the algorithm-dependent probability that each pair $(s', a')$ is visited.

When rewards are separated and $\pi_{\mathrm{LC-UCB}}^*(s', a')$ assigns non-negligible probability to the relevant state-action pairs, Bayes' rule places most posterior mass on the recent state $s$ whose mean reward for arm $a'$ is close to $r'$. Combining this with \eqref{eq:expand-cond-expectation} gives
\[
    \E[r_t \mid a_{t-1} = a', r_{t-1} = r'] \approx \mu_{s,a}.
\]
Thus this conditional expectation approximates the reward mean in the inferred recent state. When these conditional means are well estimated for all $a$, LC-UCB selects the arm with the largest $\mu_{s,a}$ for that inferred state.

Let $\gamma_a := \min_{s\ne s'}|\mu_{s,a}-\mu_{s',a}|$ and $\beta_1 := \sup_s(1-\max_u P_{su})$. Lagged context is useful when $\gamma_{a_{t-1}}$ is large relative to $\sigma$ and $\beta_1$ is small: the reward then identifies a recent state, and the transition matrix makes the current state predictable from it. If two states give the same distribution of $(a_{t-1},r_{t-1})$ and have different optimal arms, any policy that only sees that statistic has unavoidable approximation error. \Cref{ex:state-ambiguity} illustrates this obstruction and motivates probing.

\section{Probing for State Fingerprint}
LC-UCB can fail when multiple states give similar rewards for the same arm, because the lagged context may not distinguish them\footnote{Similar to the statistical notion of identifiability}. Probing addresses this ambiguity by collecting multi-arm reward summaries, which we call \textit{state fingerprints}.

\begin{example}
\label{ex:state-ambiguity}
Consider the illustrative example shown in \Cref{tbl:state-ambiguity}, where each entry is the mean reward under the given state and arm. When LC-UCB observes the context $(a_{t-1} = 0, r_{t-1} = 0.4)$, this observation is consistent with two possible states: states 0 and 1. Critically, these states have \emph{different} optimal arms: arm 0 is optimal in state 0 (since $0.4 >0.3$), while arm 1 is optimal in state 1 (since $0.5>0.4$). Thus, the lagged context alone can be non-identifying in such scenarios, motivating a probing approach.

\begin{table}[htbp]
\centering
\begin{tabular}{c c c c c}
\toprule
 & \textbf{State 0} & \textbf{State 1} & \textbf{State 2} & \textbf{State 3} \\
\midrule
\textbf{Arm 0} & 0.4 & 0.4 & 0.6 & 0.6 \\
\textbf{Arm 1} & 0.3 & 0.5 & 0.5 & 0.3 \\
\midrule
\textbf{Fingerprint $(r_0, r_1)$} & (0.4, 0.3) & (0.4, 0.5) & (0.6, 0.5) & (0.6, 0.3) \\
\bottomrule
\end{tabular}
\vspace{2pt}
\caption{Reward Matrix in \Cref{ex:state-ambiguity}.}
\label{tbl:state-ambiguity}
\end{table}
\end{example}

\vspace{-20pt}

This ambiguity becomes more likely as the number of states $S$ grows, because reward overlaps across states become more common. Single-arm observations can then be uninformative about which arm is optimal in the underlying state, so LC-UCB may degrade in environments with large state spaces and complex reward patterns.

One way to reduce this ambiguity is to observe rewards from multiple arms in the same state. If we sample both arms, each state now has a unique fingerprint --- a pair of rewards that jointly identify the state even when individual rewards do not. In \Cref{tbl:state-ambiguity}, states 0 and 1, previously indistinguishable, are now separated: $(0.4, 0.3)$ vs $(0.4, 0.5)$.

The same idea appears in randomized experiments \citep{rubin1974estimating, kohavi2009controlled}: different units can receive different treatments at the same time. In our bandit setting, assigning different arms to concurrent units yields same-state reward pairs $(r_0, r_1)$ that help identify the latent state.

\subsection{Two Probing Paradigms}
We develop two algorithms based on how fingerprints are collected, presented in \Cref{alg:rp-ucb,alg:sp-ucb}. The fingerprint representation extends to larger arm sets, but the single-unit sequential version becomes less attractive as the probe window grows.

\begin{enumerate}

\item Randomized Probing UCB (RP-UCB): Some applications naturally support simultaneous experimentation: online platforms can randomize different users to different recommendation algorithms \citep{agarwal2009explore,tang2013automatic}, clinical trials can assign different patients to different treatment protocols \citep{berry2012adaptive,villar2015multi,lai2014adaptive}, and financial trading systems can allocate portions capital to different strategies \citep{helmbold1998line}. In these settings, we can obtain true joint observations $(r_0, r_1)$ from the same time period.

\item Sequential Probing UCB (SP-UCB): When only a single decision-making unit exists, simultaneous probing is impossible. Instead, we sample arms consecutively --- playing arm $0$ at time $t$, then arm $1$ at time $t+1$, and treat $(r_t, r_{t+1})$ as a synthetic fingerprint. This approach relies on slow state transitions: if the hidden state is sufficiently ``sticky'' (i.e., with high self-transition probability), consecutive observations approximate true joint observations.
\end{enumerate}

A probe records one fingerprint, but RP-UCB and SP-UCB pay different costs as $K$ grows. RP-style probing adds coordinates to a same-round fingerprint; its statistical requirement is joint separation of the state fingerprints. SP-style probing uses $K$ consecutive rounds, so both probe cost and staleness grow with $K$. 

\subsection{Feature Construction}
\label{sec:feature-construction}

Both algorithms combine fingerprint information with lagged context to form the feature vector
\begin{equation}
\label{eq:combined-features}
\boldsymbol{\phi}_t \coloneq \phi_{\mathrm{fp}} \oplus \phi_{\mathrm{lag}} = (r_0^{\mathrm{fp}}, r_1^{\mathrm{fp}}) \oplus (a_{t-1}, r_{t-1})
\end{equation}
where $\oplus$ denotes vector concatenation. The two components are defined as:
\begin{enumerate}
    \item Fingerprint features: $\phi_{\mathrm{fp}} \coloneq (r_0^{\mathrm{fp}}, r_1^{\mathrm{fp}})$ captures the most recent joint observation from probing.
    
    \item Lagged features: $\phi_{\mathrm{lag}} \coloneq (a_{t-1}, r_{t-1})$ captures the previous action\footnote{or, where appropriate, a one-hot coding thereof} $a_{t-1}$ and the previous reward $r_{t-1}$.
\end{enumerate}

A fingerprint remains fixed between probes and may become stale after a state transition. The lagged pair $(a_{t-1}, r_{t-1})$ changes every round and adds a more recent, though noisier, state signal.

\begin{algorithm}[htbp]
\caption{Randomized Probing Upper Confidence Bound (RP-UCB)}
\label{alg:rp-ucb}
\KwData{$\alpha$, $\lambda$, $\tau$ (probing interval), $T$, $K \gets 2$}
\For{$a \gets 0, 1$}{
    Initialize LinUCB model $\mathcal{M}_a$ with exploration parameter $\alpha$ and regularization $\lambda$
}
$(r_0^{\mathrm{fp}}, r_1^{\mathrm{fp}}) \gets (0, 0)$\tcp*{Initialize fingerprint}
$(a_0, r_0) \gets (0, 0)$ \tcp*{Init. prev. action and reward}
\For{$t \gets 1, 2, \ldots, T$}{
    $\boldsymbol\phi_{\mathrm{fp}} \gets (r_0^{\mathrm{fp}}, r_1^{\mathrm{fp}})$\tcp*{Fingerprint features}
    $\boldsymbol\phi_{\mathrm{lag}} \gets (a_{t-1}, r_{t-1})$\tcp*{Lagged features}
    $\boldsymbol{\phi}_t \gets \boldsymbol\phi_{\mathrm{fp}} \oplus \boldsymbol\phi_{\mathrm{lag}}$\;
    \eIf{$t \bmod \tau = 0$}{
        $\mathtt{mode} \gets \text{probe}$\tcp*{Probe mode}
        $(a_t^C, a_t^T) \gets (0, 1)$\;
    }{
        $\mathtt{mode} \gets \text{exploit}$\tcp*{Exploit mode}
        \For{$a \gets 0, 1$}{
            Compute $\mathrm{UCB}_{t,a}$ using LinUCB model $\mathcal{M}_a$ with context $\boldsymbol{\phi}_t$\;
        }
        $a^* \gets \argmax_{a\in\{0,1\}} \mathrm{UCB}_{t, a}$\;
        $(a_t^C, a_t^T) \gets (a^*, a^*)$\;
    }
    Play actions $(a_t^C, a_t^T)$ and observe rewards $(r_t^C, r_t^T)$\;
    Update LinUCB models $\mathcal{M}_{a_t^C}$ and $\mathcal{M}_{a_t^T}$ with $(\boldsymbol{\phi}_t, r_t^C)$ and $(\boldsymbol{\phi}_t, r_t^T)$\;
    \eIf{$\mathtt{mode} = \text{probe}$}{
        $(r_0^{\mathrm{fp}}, r_1^{\mathrm{fp}}) \gets (r_t^C, r_t^T)$\;
        \eIf{$r_t^T > r_t^C$}{
            $(a_t, r_t) \gets (a_t^T, r_t^T)$\;
        }{
            $(a_t, r_t) \gets (a_t^C, r_t^C)$\;
        }
    }{
        $(a_t, r_t) \gets (a_t^C, (r_t^C + r_t^T) / 2)$\;
    }
}
\end{algorithm}

The probing interval trades off direct probe regret, probe-time identification error, and optimal-arm drift after a fingerprint is reused. The next result states this trade-off for a generic sequence of probe episodes, covering both single-round and multi-round probe blocks.

\begin{theorem}[Probe-episode regret decomposition]
\label{thm:probe-episode}
Suppose a policy's execution consists of $M$ completed probe episodes and a leftover set $\mathcal U$ of initialization or trailing rounds. In episode $m$, a probe block $\mathcal P_m$ is completed at time $c_m$ and is followed by $L_m-1$ exploit rounds, indexed by lags $j=1,\ldots,L_m-1$. During these exploit rounds the policy reuses the arm $\hat a_m^\ast$ estimated from the completed probe block. Let $\mathcal H_{m,j}$ be the observable history immediately before the lag-$j$ exploit round, including the event that this round occurs. Suppose the probe block has conditional expected regret at most $C_m$ given the history before the block, and that for every reached lag $j<L_m$,
\begin{enumerate}
    \item[(i)] $\mathbb{P}\!\left(\hat{a}_m^\ast \neq a^\ast_{s_{c_m}} \mid \mathcal H_{m,j}\right) \le \varepsilon_m$;
    \item[(ii)] $\mathbb{P}\!\left(a^\ast_{s_{c_m+j}} \neq a^\ast_{s_{c_m}} \mid \mathcal H_{m,j}\right) \le \nu_j$.
\end{enumerate}
Then
\begin{equation}
\label{eq:probe-episode}
    \mathbb{E}[R_T]
    \;\le\;
    \mathbb{E}\!\left[\sum_{m=1}^M C_m\right]
    \;+\; \Delta_{\max} \cdot \mathbb{E}\!\left[\sum_{m=1}^M \sum_{j=1}^{L_m - 1}\!\big(\varepsilon_m + \nu_j\big)\right]
    \;+\; \Delta_{\max}\,\mathbb{E}|\mathcal U|.
\end{equation}
\end{theorem}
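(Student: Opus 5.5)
The plan is to partition the rounds and bound each piece separately. The index set $\{1,\dots,T\}$ is the disjoint union of the probe blocks $\mathcal P_m$, the exploit rounds $\{c_m+j : 1\le j\le L_m-1\}$ for $m=1,\dots,M$, and the leftover set $\mathcal U$. Because $R_T=\sum_t\Delta_{s_t,a_t}$ is a sum of nonnegative terms, I would split it as $R_T=\sum_m R(\mathcal P_m)+\sum_m\sum_{j}\Delta_{s_{c_m+j},a_{c_m+j}}+\sum_{t\in\mathcal U}\Delta_{s_t,a_t}$ and take expectations term by term.

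\textbf{Leftover rounds.} Each term satisfies $\Delta_{s_t,a_t}\le\Delta_{\max}$, so this piece contributes at most $\Delta_{\max}\,\mathbb E|\mathcal U|$.

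\textbf{Probe blocks.} Condition on the history $\mathcal F_m$ just before block $m$; by hypothesis $\mathbb E[R(\mathcal P_m)\mid\mathcal F_m]\le C_m$. Since $M$ is random, I would write the total as $\sum_{m\ge1}\mathbf 1\{\text{episode } m \text{ starts}\}R(\mathcal P_m)$. The start event is measurable with respect to $\mathcal F_m$, so the tower property gives a bound of $\mathbb E[\sum_m C_m]$. Monotone convergence justifies the interchange, because all terms are nonnegative and $C_m$ is treated as an $\mathcal F_m$-measurable bound.

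\textbf{Exploit rounds.} In round $c_m+j$ the played arm is $\hat a_m^\ast$, so $\Delta_{s_{c_m+j},\hat a_m^\ast}\le\Delta_{\max}\mathbf 1\{\hat a_m^\ast\ne a^\ast_{s_{c_m+j}}\}$, since the gap vanishes whenever the arm is optimal for the current state. I would then use the inclusion
\[
\{\hat a_m^\ast\ne a^\ast_{s_{c_m+j}}\}\subseteq\{\hat a_m^\ast\ne a^\ast_{s_{c_m}}\}\cup\{a^\ast_{s_{c_m+j}}\ne a^\ast_{s_{c_m}}\},
\]
which holds because if $\hat a_m^\ast=a^\ast_{s_{c_m}}=a^\ast_{s_{c_m+j}}$ there is no error. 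Conditioning on $\mathcal H_{m,j}$, which includes the event that the lag-$j$ round occurs, and applying the union bound with (i) and (ii) gives a conditional expected regret of at most $\Delta_{\max}(\varepsilon_m+\nu_j)$. Summing over the reached rounds is again done with indicators of reachability that are measurable in $\mathcal H_{m,j}$. The tower property then gives $\Delta_{\max}\mathbb E[\sum_m\sum_{j<L_m}(\varepsilon_m+\nu_j)]$.

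\textbf{Main obstacle.} The delicate point is the measurability bookkeeping. $M$ and $L_m$ are random stopping-type quantities, and $\varepsilon_m$ and $C_m$ may themselves be random. The sums must therefore be written as $\sum_{m,j}\mathbf 1\{\text{round reached}\}(\cdot)$, with each indicator determined by the conditioning history, so that conditional bounds can be pulled inside. The convention of tie-breaking in $a^\ast_s$ must also be fixed, so that $a_t=a^\ast_{s_t}$ implies zero gap. Nonnegativity of all terms lets Tonelli handle the countable sums, and adding the three pieces gives \eqref{eq:probe-episode}.
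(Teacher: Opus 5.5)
Your proposal is correct and follows the paper's proof essentially step for step. It uses the same partition into probe blocks, exploit lags, and leftover rounds, and the same pointwise bound of the lag-$j$ gap by $\Delta_{\max}$ times the indicators of probe-time error and optimal-arm drift. It then conditions on $\mathcal H_{m,j}$ to invoke (i) and (ii), exactly as the paper does; your explicit indicator and tower-property bookkeeping for the random $M$, $L_m$, $C_m$, and $\varepsilon_m$ is more careful than the paper's one-line summation, but it does not change the argument.
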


\Cref{thm:probe-episode} clarifies the role of the adaptive gates introduced in the next section: the hazard gate upper-bounds the episode length $L_m$ and therefore caps the drift sum, while the residual and uncertainty gates target the probe-time error $\varepsilon_m$. In particular, we have the following regret bound for the periodic probing scheme:

\begin{corollary}[Periodic probing]
\label{thm:tau}
Suppose $S = 2$ and each state has a unique optimal arm, and that the Markov chain satisfies $q := \sup_{s} \mathbb{P}(s_{t+1} \neq s_t \mid s_t = s)$, so that $\nu_j \le j\, q$. Consider an idealized policy that probes once every $\tau$ rounds, with each probe block incurring regret at most $\Delta_{\mathrm{probe}}$ and returning an estimate of the current optimal arm with error probability at most $\varepsilon_{\mathrm{fp}}$. Then
\[
    \frac{1}{T}\,\mathbb{E}[R_T]
    \le
    \frac{\Delta_{\mathrm{probe}}}{\tau}
    + \frac{\Delta_{\max}\, q\, (\tau-1)}{2}
    + \Delta_{\max}\,\varepsilon_{\mathrm{fp}}
    + O\!\left(\frac{\Delta_{\mathrm{probe}}+\tau\Delta_{\max}}{T}\right).
\]
Ignoring $\varepsilon_{\mathrm{fp}}$ and lower-order endpoint terms, the bound is minimized at $\tau^* \asymp \sqrt{\Delta_{\mathrm{probe}}/(\Delta_{\max}\, q)}$; a more volatile chain (larger $q$) calls for more frequent probing.
\end{corollary}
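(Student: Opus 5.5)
We derive the corollary as a direct specialization of \Cref{thm:probe-episode}. The idealized periodic policy splits the horizon into consecutive episodes: one probe block followed by $\tau-1$ exploit rounds that reuse $\hat a_m^\ast$. Hence $L_m=\tau$ for every completed episode, and $M=\lfloor T/\tau\rfloor$ up to an $O(1)$ correction. The leftover set $\mathcal U$ collects the rounds before the first completed probe and the incomplete trailing episode, so $|\mathcal U|\le 2\tau$ deterministically. We then verify the hypotheses of \Cref{thm:probe-episode} with $C_m=\Delta_{\mathrm{probe}}$ and $\varepsilon_m=\varepsilon_{\mathrm{fp}}$; both are given by assumption, and we read the error-probability assumption as holding conditionally on $\mathcal H_{m,j}$.

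The one step needing an argument is condition (ii), $\nu_j\le jq$, which is quoted in the statement but should be justified. We proceed as follows.
\begin{enumerate}
\item Since $S=2$ and each state has a unique optimal arm, $a^\ast_{s_{c_m+j}}\neq a^\ast_{s_{c_m}}$ can occur only if $s_{c_m+j}\neq s_{c_m}$. That in turn requires at least one transition among the steps $c_m\to c_m+1,\dots,c_m+j-1\to c_m+j$.
\item A union bound over these $j$ steps gives $\nu_j\le jq$, provided each step changes state with conditional probability at most $q$ given $\mathcal H_{m,j}$.
\end{enumerate}

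This conditional bound is the main obstacle, because $\mathcal H_{m,j}$ contains rewards observed after $c_m$, and those rewards are informative about the intermediate states. I would handle it by conditioning first on the full state path together with the policy's randomness. The states are autonomous: actions do not affect transitions. We therefore bound the unconditional-on-rewards probability along the Markov chain by $jq$ using the transition bound at each step, and then interpret (ii) in the averaged sense that \Cref{thm:probe-episode} actually uses, since it enters only through expectations. If a genuinely conditional version is required, I would instead state the corollary with $\nu_j\le jq$ as the assumption, as the statement already phrases it.

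Plugging in, the middle term of \eqref{eq:probe-episode} becomes
\[
\Delta_{\max}\sum_{m=1}^M\sum_{j=1}^{\tau-1}(\varepsilon_{\mathrm{fp}}+jq)=\Delta_{\max}M\Big((\tau-1)\varepsilon_{\mathrm{fp}}+q\frac{\tau(\tau-1)}{2}\Big).
\]
The first term is $M\Delta_{\mathrm{probe}}$, and the last is at most $2\tau\Delta_{\max}$. Using $M\le T/\tau$ and dividing by $T$ gives
\[
\frac{\Delta_{\mathrm{probe}}}{\tau}+\Delta_{\max}\varepsilon_{\mathrm{fp}}\frac{\tau-1}{\tau}+\frac{\Delta_{\max}q(\tau-1)}{2}+O\!\Big(\frac{\tau\Delta_{\max}+\Delta_{\mathrm{probe}}}{T}\Big),
\]
where $(\tau-1)/\tau\le1$. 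Finally, we minimize $f(\tau)=\Delta_{\mathrm{probe}}/\tau+\Delta_{\max}q\tau/2$ over $\tau>0$. Setting $f'(\tau)=0$ gives $\tau^\ast=\sqrt{2\Delta_{\mathrm{probe}}/(\Delta_{\max}q)}\asymp\sqrt{\Delta_{\mathrm{probe}}/(\Delta_{\max}q)}$, which decreases in $q$ as claimed.
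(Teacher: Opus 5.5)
Your proposal is correct and follows the paper's proof essentially step for step. You union-bound over the $j$ intermediate transitions to get $\nu_j\le jq$, apply \Cref{thm:probe-episode} with $L_m=\tau$, $C_m\le\Delta_{\mathrm{probe}}$ and $\varepsilon_m\le\varepsilon_{\mathrm{fp}}$, sum per cycle, divide by $\tau$, and absorb the endpoint fragment into the lower-order term. Your extra care about conditioning is a genuine improvement over the paper, which only verifies the drift bound given $s_t=s$ rather than given $\mathcal H_{m,j}$. Your averaged reading is legitimate because $c_m$, $L_m$ and $M$ are deterministic under periodic probing and the autonomous chain's law does not depend on the policy; note that this would not suffice for gate-triggered episodes with random timing.
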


The proof can be found in \Cref{app:probe-schedule-proof}.

\paragraph{When do lagged context and fingerprints identify the state?}
The representation argument in Appendix~\ref{app:representation-bounds} relates the approximation loss of a history statistic $Z_t$ to its Bayes state-decoding error $e_t(Z)$. Thus the key question is whether the lagged context or the probe fingerprint contains enough information to recover the current hidden state. Let
\(
    \gamma_a := \min_{s \neq s'} \lvert \mu_{s,a} - \mu_{s',a} \rvert
\)
denote the single-arm separation gap for arm $a$, and let
\(
    \beta_1 := \sup_s\!\left(1-\max_u P_{su}\right)
\)
denote the one-step predictability error. The lagged context $(a_{t-1}, r_{t-1})$ is informative for $s_t$ when $\gamma_{a_{t-1}}$ is large relative to $\sigma$ and $\beta_1$ is small: the first condition helps decode $s_{t-1}$, and the second propagates that estimate to $s_t$. A fingerprint over an arm set $B\subseteq\mathcal A$ replaces the single-arm gap by the joint separation
\[
    \Gamma_B := \min_{s \neq s'} \big\lVert (\mu_{s,a})_{a\in B} - (\mu_{s',a})_{a\in B} \big\rVert_2,
\]
with $B=\{0,1\}$ in the two-arm experiments. This can separate states even when individual arms are ambiguous, as in \Cref{tbl:state-ambiguity}. 

Sequential fingerprinting additionally pays a state-mismatch cost because the hidden state can transition between the two consecutive arms used to build the fingerprint. For a two-arm sequential probe, this mismatch event has probability at most $q$ (the one-step state-change probability of \Cref{thm:tau}), which is the structural reason that sequential variants degrade under fast-switching chains while randomized variants do not. A detailed analysis can be found in \Cref{app:representation-bounds}.

\begin{algorithm}[htbp]
\caption{Sequential Probing Upper Confidence Bound (SP-UCB)}
\label{alg:sp-ucb}
\KwData{$\alpha, \lambda, \tau, T, K \gets 2$}
\tcp{(Omitted) Lines 1-4 from \Cref{alg:rp-ucb}}
\For{$t \gets 1, 2, \ldots, T$}{
    \tcp{(Omitted) Lines 6-8 from \Cref{alg:rp-ucb}}
    \eIf{$t \bmod \tau \in \{0, 1\}$}{
        $\mathtt{mode} \gets \text{probe}$\tcp*{Probe mode}
        $a_t \gets t \bmod \tau$\;
    }{
        $\mathtt{mode} \gets \text{exploit}$\tcp*{Exploit mode}
        \For{$a \gets 0, 1$}{
            Compute $\mathrm{UCB}_{t,a}$ using LinUCB model $\mathcal{M}_a$ with context $\boldsymbol{\phi}_t$\;
        }
        $a_t \gets \argmax_{a\in\{0,1\}} \mathrm{UCB}_{t, a}$\;
    }
    Play action $a_t$ and observe reward $r_t$\;
    \If{$\mathtt{mode} = \text{probe}$ and $a_t = 1$}{
        $(r_0^{\mathrm{fp}}, r_1^{\mathrm{fp}}) \gets (r_{t-1}, r_t)$\;
    }
    Update $\mathcal{M}_{a_t}$ with $(\boldsymbol{\phi}_t, r_t)$\;
}
\end{algorithm}

\section{Upper Confidence Bound with Adaptive Probing}
Fixed-schedule probing in RP-UCB and SP-UCB collects information at predetermined times, but each probe can incur regret. The same interval is used early and late in learning, even though model uncertainty changes over time. Early in learning, frequent probing can accelerate convergence; later, when the model is more certain, excessive probing wastes exploitation rounds. If prediction errors spike, indicating state drift or model mismatch, probing may be useful outside the fixed schedule.

Adaptive probing makes the probe decision data dependent rather than periodic. The adaptive probing variant of RP-UCB (\Cref{alg:rp-ucb}) is AdaRP-UCB, presented in \Cref{alg:adarp-ucb}; the analogous variant of SP-UCB (\Cref{alg:sp-ucb}) is AdaSP-UCB, shown in \Cref{alg:adasp-ucb} in the appendix.

\subsection{Adaptive Gating Mechanism}
\label{sec:adaptive-gating}
We use three probe triggers, each based on a different signal. Probing is triggered when any gate activates, subject to a minimum gap constraint.

\begin{enumerate}
    \item Residual gate detects model drift. The residual gate monitors prediction errors. If the model accurately captures the state-reward relationship, residuals should be small. Large residuals indicate either model misspecification or a state transition that invalidates the current fingerprint. At each round, we compute the Studentized residual
    \[
        z_{t-1} = \frac{r_{t-1} - \hat{r}_{t-1}}{\sqrt{\mathrm{Var}(\hat{r}_{t-1}) + \sigma_0^2}}
    \]
    where $\hat{r}_{t-1} = \boldsymbol{\phi}_{t-1}^\top \hat{\boldsymbol{\theta}}_{a_{t-1}}$ is the predicted reward, $\mathrm{Var}(\hat{r}_{t-1}) = \boldsymbol{\phi}_{t-1}^\top \mathbf{A}_{a_{t-1}}^{-1} \boldsymbol{\phi}_{t-1}$ is the model uncertainty, and $\sigma_0^2$ is ideally the observation noise variance $\sigma^2$, but serves primarily as a floor to prevent the residual gate from triggering too often.
    
    The residual gate activates when the absolute residual exceeds a threshold.
    \[
        g_{\mathrm{res}} \coloneq \mathbf{1}\left[ |z_{t-1}| \geq z_{\mathrm{thresh}} \right].
    \]
    Under correct model specification and no state transition, $z_{t-1}$ is approximately standard normal. Setting $z_{\mathrm{thresh}} \approx 2$ corresponds to probing when residuals fall outside a 95\% confidence interval.
    
    \item Uncertainty gate handles small UCB margins. The gate monitors the gap between the largest two UCB scores. When the gap is small, an additional probe can update the fingerprint before the algorithm commits to an arm. We compute the UCB margin as $m_t = \left| \mathrm{UCB}_{t,0} - \mathrm{UCB}_{t,1} \right|$.
    
    The uncertainty gate activates when the margin falls below a threshold.
    \[
        g_{\mathrm{unc}} \coloneq \mathbf{1}\left[ m_t \leq m_{\mathrm{thresh}} \right].
    \]
    A small margin means the confidence intervals overlap substantially, so the algorithm refreshes the fingerprint before choosing between the top arms.
    
    \item Staleness gate bounds fingerprint age. The staleness gate bounds the age of the current fingerprint, covering cases where residuals and margins do not trigger a refresh. We use an exponential hazard function based on time since last probe
    \[
        h(t) = 1 - \exp\left( -\lambda_h (t - t_{\mathrm{probe}}) \right).
    \]
    
    The staleness gate activates when the hazard exceeds a threshold.
    \[
        g_{\mathrm{haz}} \coloneq \mathbf{1}\left[ h(t) \geq \delta_h \right].
    \]
    Equivalently, the threshold defines an age cap $H=\lceil-\log(1-\delta_h)/\lambda_h\rceil$ for the current fingerprint. With $\lambda_h=0.1$ and $\delta_h=0.5$, $H\approx 7$. Residual and uncertainty gates can refresh earlier when prediction errors or small margins indicate that the current fingerprint is unreliable.
\end{enumerate}

Combining the three gates, the algorithm probes when any gate activates, subject to a minimum gap.
\[
    \text{Probe at } t \iff \left( g_{\mathrm{res}} \lor g_{\mathrm{unc}} \lor g_{\mathrm{haz}} \right) \land \left( t - t_{\mathrm{probe}} \geq \tau_{\mathrm{min}} \right).
\]
The minimum gap $\tau_{\min}$ prevents oscillatory behavior where the algorithm probes excessively in response to transient fluctuations.

Unless stated otherwise, experiments use $z_{\mathrm{thresh}}=2.0$, $m_{\mathrm{thresh}}=0.05$, $\lambda_h=0.1$, $\delta_h=0.5$, and $\tau_{\min}=2$. The residual floor $\sigma_0$ is set to the observation-noise scale $\sigma$, which is $0.05$ in the robustness experiments. Gates are evaluated before action selection using current LinUCB scores; for $K>2$, the uncertainty margin is the top-two UCB gap. A probe updates the timestamp at its first round, all probe rounds update the learner, and ties follow the fixed arm ordering.

\begin{algorithm}[htbp]
\caption{Adaptive Randomized Probing Upper Confidence Bound (AdaRP-UCB)}
\label{alg:adarp-ucb}
\SetKwProg{Fn}{Function}{:}{}
\SetKwFunction{FComputeGates}{ComputeGates}
\KwData{$\alpha$, $\lambda$, $z_{\mathrm{thresh}}$, $\sigma_0$, $m_{\mathrm{thresh}}$, $\lambda_h$, $\delta_h$, $\tau_{\mathrm{min}}$, $T$}
\For{$a \gets 0, 1$}{
    Initialize LinUCB model $\mathcal{M}_a$ with exploration parameter $\alpha$ and regularization $\lambda$
}
$(r_0^{\mathrm{fp}}, r_1^{\mathrm{fp}}) \gets (0, 0)$\tcp*{Initialize fingerprint}
$(a_0, r_0) \gets (0, 0)$ \tcp*{Init. prev. action and reward}
$t_\mathrm{probe} \gets 0$\;
\For{$t = 1, 2, \ldots, T$}{
    $\boldsymbol\phi_{\mathrm{fp}} \gets (r_0^{\mathrm{fp}}, r_1^{\mathrm{fp}})$\tcp*{Fingerprint features}
    $\boldsymbol\phi_{\mathrm{lag}} \gets (a_{t-1}, r_{t-1})$\tcp*{Lagged features}
    $\boldsymbol{\phi}_t \gets \boldsymbol\phi_{\mathrm{fp}} \oplus \boldsymbol\phi_{\mathrm{lag}}$\;
    \For{$a \gets 0, 1$}{
        Compute $\mathrm{UCB}_{t,a}$ using LinUCB model $\mathcal{M}_a$ with context $\boldsymbol{\phi}_t$\;
    }
    
    $g \gets \FComputeGates$\;
    \eIf{$g$}{
        $\mathtt{mode} \gets \text{probe}$\tcp*{Probe mode}
        $(a_t^C, a_t^T) \gets (0, 1)$\;
        $t_{\mathrm{probe}} \gets t$\;
    }{
        $\mathtt{mode} \gets \text{exploit}$\tcp*{Exploit mode}
        $a^* \gets \arg\max_a \mathrm{UCB}_{t,a}$\;
        $(a_t^C, a_t^T) \gets (a^*, a^*)$\;
    }
    Play actions $(a_t^C, a_t^T)$ and observe reward $(r_t^C, r_t^T)$\;
    Update LinUCB models $\mathcal{M}_{a_t^C}$ and $\mathcal{M}_{a_t^T}$ with $(\boldsymbol{\phi}_t, r_t^C)$ and $(\boldsymbol{\phi}_t, r_t^T)$\;
    \eIf{$\mathtt{mode} = \text{probe}$}{
        $(r_0^{\mathrm{fp}}, r_1^{\mathrm{fp}}) \gets (r_t^C, r_t^T)$\;
        \eIf{$r_t^T > r_t^C$}{
            $(a_t, r_t) \gets (a_t^T, r_t^T)$\;
        }{
            $(a_t, r_t) \gets (a_t^C, r_t^C)$\;
        }
    }{
        $(a_t, r_t) \gets (a_t^C, (r_t^C + r_t^T) / 2)$\;
    }
}
\Fn{ComputeGates}{
    \tcp{Residual gate}
    $\hat{r}_{t-1} \gets \boldsymbol{\phi}_{t-1}^\top \hat{\boldsymbol{\theta}}_{a_{t-1}}$\;
    $v_{t-1} \gets \boldsymbol{\phi}_{t-1}^\top \mathbf{A}_{a_{t-1}}^{-1} \boldsymbol{\phi}_{t-1}$\;
    $z_{t-1} \gets \frac{r_{t-1} - \hat{r}_{t-1}}{\sqrt{v_{t-1} + \sigma_0^2}}$\;
    $g_{\mathrm{res}} = \mathbf{1}[|z_{t-1}| \geq z_{\mathrm{thresh}}]$\;
    
    \tcp{Uncertainty gate}
    $m_t = |\mathrm{UCB}_{t,0} - \mathrm{UCB}_{t,1}|$\;
    $g_{\mathrm{unc}} = \mathbf{1}[m_t \leq m_{\mathrm{thresh}}]$\;

    \tcp{Staleness gate}
    $g_{\mathrm{haz}} = \mathbf{1}[1 - \exp(-\lambda_h (t - t_{\mathrm{probe}})) \geq \delta_h]$\;

    \Return $(g_{\mathrm{res}} \vee g_{\mathrm{unc}} \vee g_{\mathrm{haz}}) \wedge (t - t_{\mathrm{probe}} \geq \tau_{\min})$\;
}
\end{algorithm}

\begin{algorithm}[htbp]
\caption{Adaptive Sequential Probing Upper Confidence Bound (AdaSP-UCB)}
\label{alg:adasp-ucb}
\SetKwFunction{FComputeGates}{ComputeGates}
\KwData{$\alpha$, $\lambda$, $z_{\mathrm{thresh}}$, $\sigma_0$, $m_{\mathrm{thresh}}$, $\lambda_h$, $\delta_h$, $\tau_{\mathrm{min}}$, $T$}
\tcp{(Omitted) Lines 1-5 from \Cref{alg:adarp-ucb}}
$\mathtt{mode} \gets \text{exploit}$\;
\For{$t = 1, 2, \ldots, T$}{
    \tcp{(Omitted) Lines 7-12 from \Cref{alg:adarp-ucb}}
    \If{$\mathtt{mode} = \text{exploit}$}{
        \If{$g$}{
            $\mathtt{mode} \gets \text{probe}$\tcp*{Probe mode}
            $\mathtt{probe\_arm} \gets 0$\tcp*{Start probing arm $0$}
        }
    }
    \eIf{$\mathtt{mode} = \text{probe}$}{
        $a_t \gets \mathtt{probe\_arm}$\;
        $\mathtt{probe\_arm} \gets \mathtt{probe\_arm} + 1$\tcp*{Probe the next arm next time}
    }{
        $a_t \gets \arg\max_a \mathrm{UCB}_{t,a}$\;
    }
    Play action $a_t$ and observe reward $r_t$\;
    \If{$\mathtt{mode} = \text{probe}$ and $\mathtt{probe\_arm} = 2$}{
        $\mathtt{mode} \gets \text{exploit}$\tcp*{Completed sequential probing}
        $(r_0^{\mathrm{fp}}, r_1^{\mathrm{fp}}) \gets (r_{t-1}, r_t)$\;
    }
    Update $\mathcal{M}_{a_t}$ with $(\boldsymbol{\phi}_t, r_t)$\;
}
\end{algorithm}

\subsection{A Unified View}
\label{sec:unified-view}

The algorithms differ along two axes: the context supplied to LinUCB and the schedule used to collect probe information. LC-UCB uses only the lagged pair $(a_{t-1}, r_{t-1})$. RP-UCB and SP-UCB append a probe fingerprint, with periodic probes under the fixed-schedule variants and residual, margin, and hazard triggers under the adaptive variants. Randomized probing requires concurrent units but observes same-round fingerprints; sequential probing uses one unit but requires the state to remain stable during the probe window. We use LinUCB because the gates need predicted rewards, uncertainty estimates, and top-two UCB margins from the same feature representation.

\section{Experiments}
We evaluate the adaptive probing algorithms against classical bandits (UCB1, Thompson Sampling, EXP3) and non-stationary bandits (Sliding Window UCB, Discounted UCB, EXP3-S).\footnote{Please refer to \Cref{tbl:algorithm_reference} for quick reference of all algorithms that we consider.} The environment is a two-armed bandit with $S$ hidden states, Gaussian rewards $\mathcal{N}(\mu_{s,a}, \sigma^2)$, and Markov transitions with self-transition probability $p_{\mathrm{stay}}$. The randomized algorithms (AdaRP-UCB and RP-UCB) control two contemporaneous units per decision time: probe rounds assign the two units to arms $0$ and $1$, while exploit rounds assign both units to the selected arm. We report their cumulative regret on a per-unit scale, so a two-unit round contributes the average of the two units' instantaneous gaps before summing over the same $T$ decision times used by the single-unit methods. For these two-unit methods, each observed unit reward uses variance $2\sigma^2$, making same-round fingerprints noisier than in the single-unit reward model.

We vary $S \in \{2, 10, 20, 50\}$, $p_{\mathrm{stay}} \in \{0.5, 0.8, 0.9, 0.95, 0.99\}$, $\sigma \in \{0.01, 0.05, 0.1, 0.5\}$, and $T \in \{500, 1000, 5000,\allowbreak 20000\}$. For each configuration, we sample $128$ reward matrices with $\mu_{s,a} \sim \mathrm{Uniform}[0,1]$ and run each algorithm $5$ times. We report cumulative regret.

\begin{table*}[t]
\centering
\footnotesize
\setlength{\tabcolsep}{2pt}
\renewcommand{\arraystretch}{1.18}
\begin{tabular}{@{}rlrrrrrrrrrrrrr@{}}
\toprule
\textbf{Value} & \textbf{Perturbation}
& \multicolumn{2}{c}{\textbf{Two-unit randomized}}
& \multicolumn{10}{c}{\textbf{Single-unit}}
& \multicolumn{1}{c}{\textbf{Oracle}} \\
\cmidrule(lr){3-4}\cmidrule(lr){5-14}\cmidrule(l){15-15}
& & \multicolumn{1}{c}{\shortstack{\textbf{AdaRP}\\\textbf{UCB}}}
& \multicolumn{1}{c}{\shortstack{\textbf{RP}\\\textbf{UCB}}}
& \multicolumn{1}{c}{\shortstack{\textbf{AdaSP}\\\textbf{UCB}}}
& \multicolumn{1}{c}{\textbf{D-UCB}}
& \multicolumn{1}{c}{\textbf{EXP3}}
& \multicolumn{1}{c}{\textbf{EXP3-S}}
& \multicolumn{1}{c}{\textbf{LC-TS}}
& \multicolumn{1}{c}{\textbf{LC-UCB}}
& \multicolumn{1}{c}{\textbf{SP-UCB}}
& \multicolumn{1}{c}{\textbf{SW-UCB}}
& \multicolumn{1}{c}{\textbf{TS}}
& \multicolumn{1}{c}{\textbf{UCB1}}
& \multicolumn{1}{c}{\shortstack{\textbf{Opt.}\\\textbf{single arm}}} \\
\midrule
 & Default & \best{452.22} & 943.64 & \best{707.69} & 1553.94 & 4516.35 & 3452.92 & 1299.25 & 1530.02 & 1137.14 & 1476.74 & 5037.26 & 2763.03 & 4451.31 \\
\addlinespace[2pt]
2  & Number of States & \best{342.58} & 933.81 & 585.72 & 1400.98 & 2366.72 & 2971.22 & 153.48 & \best{118.35} & 327.74 & 1393.97 & 2586.57 & 1102.10 & 2074.36 \\
20 & Number of States & \best{467.12} & 948.49 & \best{712.65} & 1564.25 & 5149.11 & 3536.75 & 1450.52 & 1914.41 & 1244.48 & 1484.23 & 5681.02 & 3167.98 & 5192.14 \\
50 & Number of States & \best{468.95} & 966.04 & \best{709.16} & 1585.24 & 5737.84 & 3623.26 & 1552.16 & 2066.59 & 1292.20 & 1500.65 & 6252.61 & 3504.34 & 5840.72 \\
\addlinespace[2pt]
0.50 & Self-transition Prob. & \best{4007.60} & 4630.15 & 4516.70 & 4377.77 & 4751.64 & 5509.11 & 3659.89 & \best{3624.84} & 3739.32 & 4952.51 & 5323.92 & 4377.71 & 4572.14 \\
0.80 & Self-transition Prob. & \best{2368.62} & 3412.53 & 2964.17 & 2995.52 & 4637.48 & 5326.39 & \best{2351.42} & 2407.24 & 2554.18 & 4039.61 & 5237.72 & 3487.36 & 4445.60 \\
0.90 & Self-transition Prob. & \best{1600.62} & 2510.65 & 2153.95 & 2567.11 & 4706.50 & 5198.32 & \best{2031.90} & 2123.06 & 2259.17 & 3441.85 & 5395.82 & 3197.33 & 4541.32 \\
0.95 & Self-transition Prob. & \best{1065.21} & 2017.19 & \best{1514.30} & 2209.35 & 4885.10 & 4977.90 & 1694.93 & 1783.73 & 1907.38 & 2807.29 & 5490.35 & 3064.73 & 4760.40 \\
\addlinespace[2pt]
0.05 & Reward Noise SD. & \best{644.76}  & 1075.98 & \best{814.56} & 1548.92 & 4589.35 & 3502.60 & 1225.25 & 1363.40 & 1127.96 & 1491.20 & 5287.53 & 2853.86 & 4573.26 \\
0.10 & Reward Noise SD. & \best{1330.41} & 1379.29 & 1444.04 & 1541.61 & 4558.92 & 3458.40 & \best{1157.71} & 1254.24 & 1202.42 & 1507.11 & 5267.70 & 2783.66 & 4491.77 \\
0.50 & Reward Noise SD. & 3960.63 & \best{3872.88} & 4118.41 & \best{1870.63} & 4637.03 & 3573.40 & 2842.84 & 2480.63 & 2840.06 & 2041.06 & 5741.33 & 3119.61 & 4544.59 \\
\addlinespace[2pt]
500  & Number of Rounds & \best{16.01}  & 24.69  & 23.85  & 37.32  & 98.90   & 90.01  & 36.37  & \best{23.20} & 23.38  & 36.78  & 95.03   & 44.90  & 117.45 \\
1000 & Number of Rounds & \best{30.22}  & 48.74  & \best{45.40}  & 72.79  & 207.13  & 178.43 & 70.23  & 53.38  & 48.18  & 72.54  & 215.67  & 99.92  & 233.18 \\
5000 & Number of Rounds & \best{125.89} & 242.22 & \best{192.60} & 385.43 & 1131.87 & 872.31 & 341.68 & 364.29 & 285.26 & 371.11 & 1286.11 & 612.66 & 1135.05 \\
\bottomrule
\end{tabular}
\vspace{2pt}
\caption{Cumulative regret across environment perturbations. Best values within each deployment group are highlighted.}
\label{tbl:performance_metrics}

\end{table*}

The mean cumulative regrets are in \Cref{tbl:performance_metrics}. Among randomized methods, AdaRP-UCB is best in $12$ of $13$ configurations and improves most over RP-UCB in volatile environments. Among single-unit methods, LC-UCB, LC-TS, AdaSP-UCB, and SP-UCB consistently have lower regret than the classical and non-stationary baselines. AdaSP-UCB has the lowest regret under low reward noise, slow mixing, or larger state spaces, while lagged-context methods are stronger under faster mixing. The main exception is high noise ($\sigma=0.5$, with expected arm gap about $0.1$), where fingerprints are weak and D-UCB or SW-UCB can do better. The winning-rate analysis in \Cref{fig:h2h} gives a complementary view of these comparisons across parameter regimes.

\begin{figure}[t]
    \centering
    \includegraphics[width=0.9\linewidth]{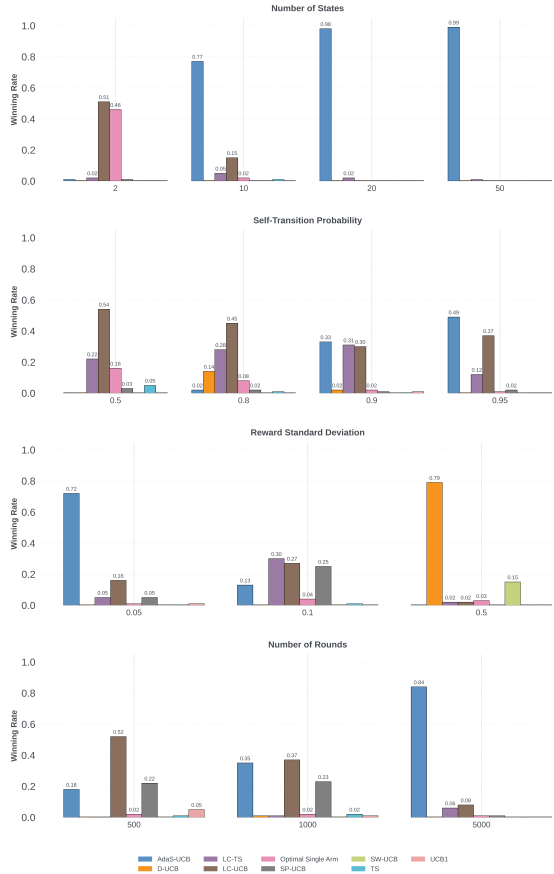}
    \Description[Head-to-head winning rates under different problem-specific parameters.]{
    Head-to-head winning rates across sweeps over the number of states, self-transition probability, reward noise, and number of rounds.
    }
    \caption{Head-to-head winning rates under different problem-specific parameters.}
    \label{fig:h2h}
\end{figure}

We also measure optimal-arm frequency, the fraction of rounds in which an algorithm selects the oracle best arm $a_t^*$ for the realized hidden state sequence. \Cref{fig:all_freq} reports this quantity for the default setting and three single-axis perturbations, with curves smoothed over a sliding window.

\begin{figure}[t]
    \centering
    \begin{subfigure}[b]{0.48\linewidth}
        \centering
        \includegraphics[width=\linewidth]{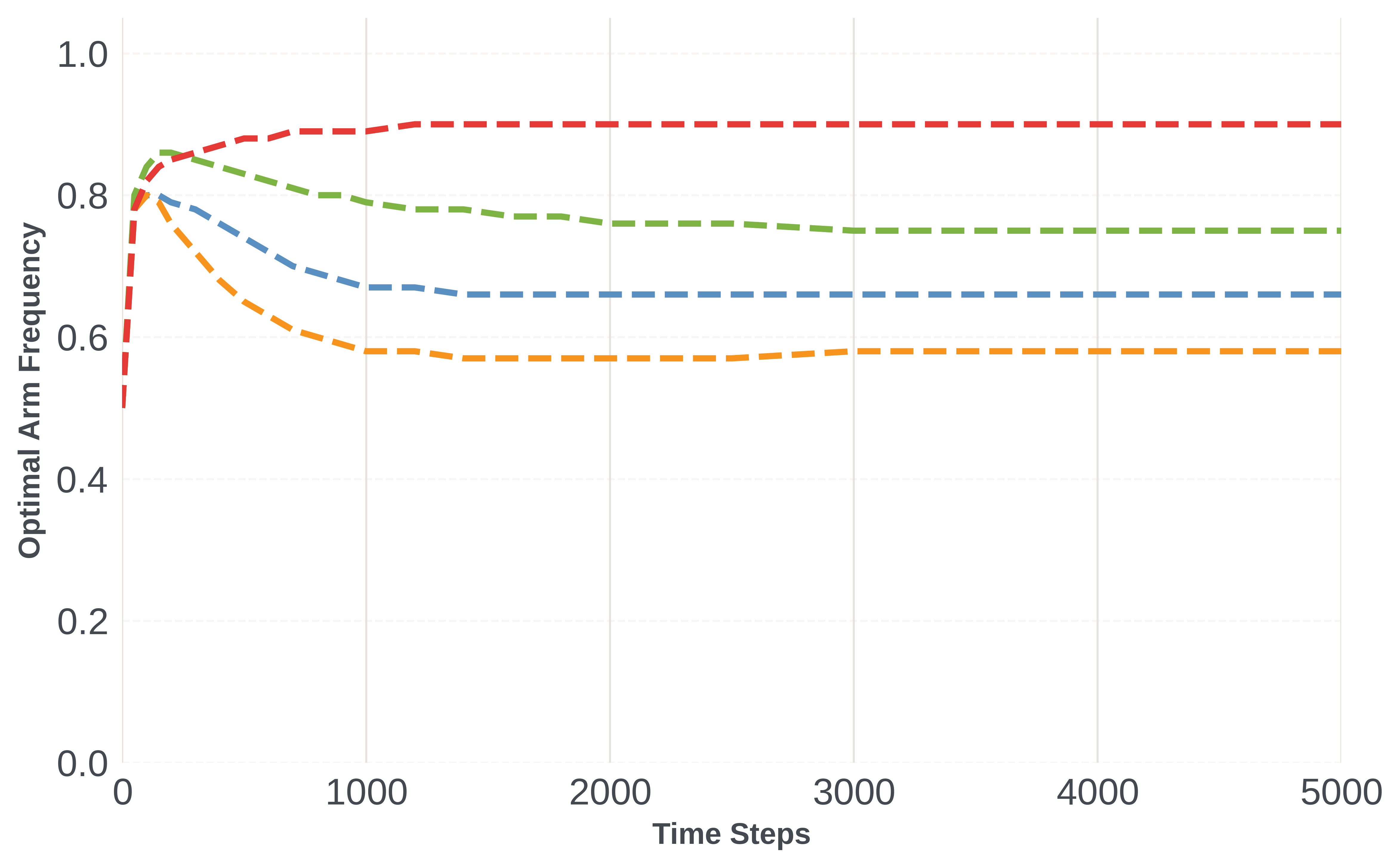}
        \caption{Default setting.}
        \label{fig:freq1}
    \end{subfigure}
    \hfill
    \begin{subfigure}[b]{0.48\linewidth}
        \centering
        \includegraphics[width=\linewidth]{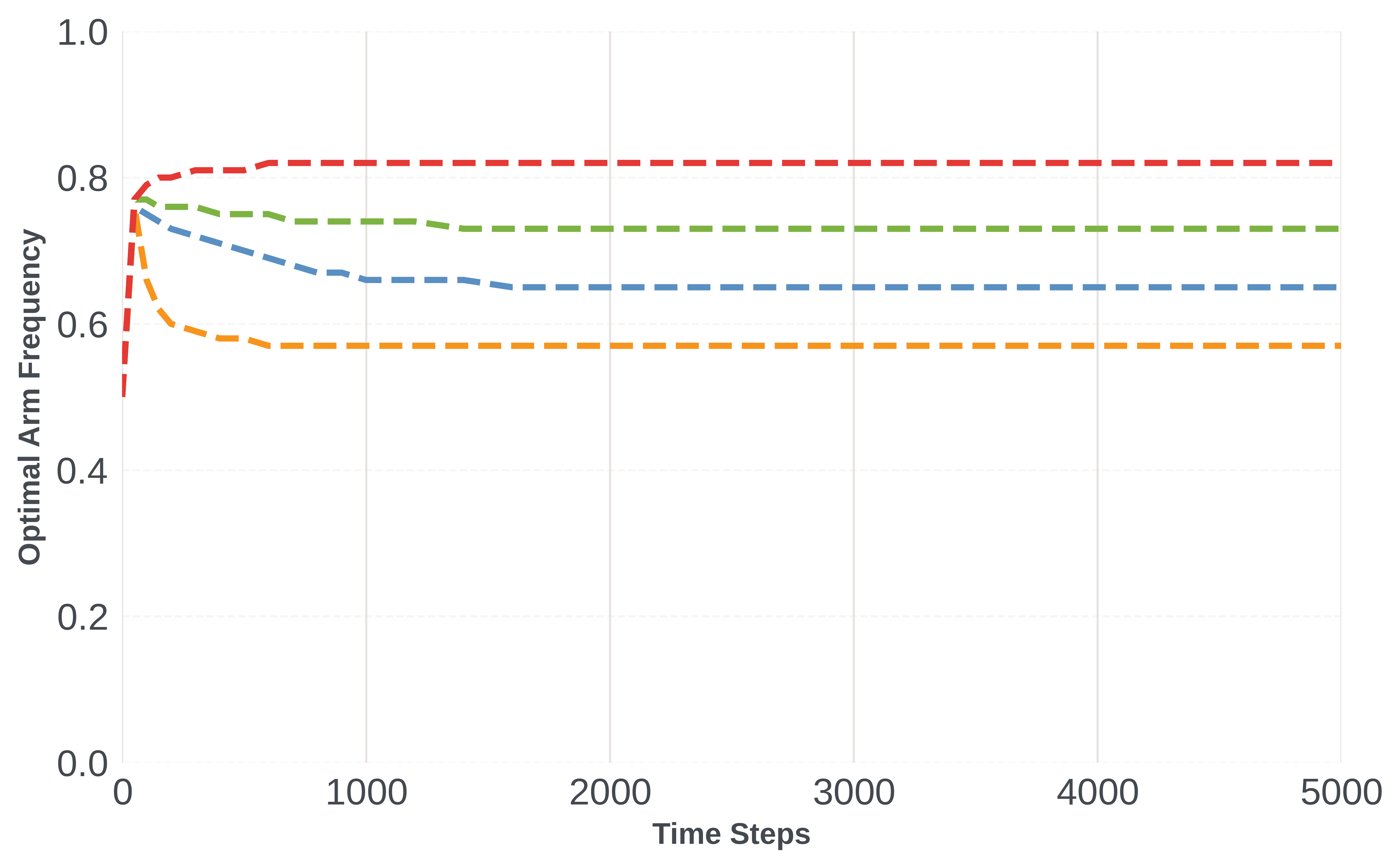}
        \caption{$p_{\mathrm{stay}}=0.5$.}
        \label{fig:freq2}
    \end{subfigure}

    \begin{subfigure}[b]{0.48\linewidth}
        \centering
        \includegraphics[width=\linewidth]{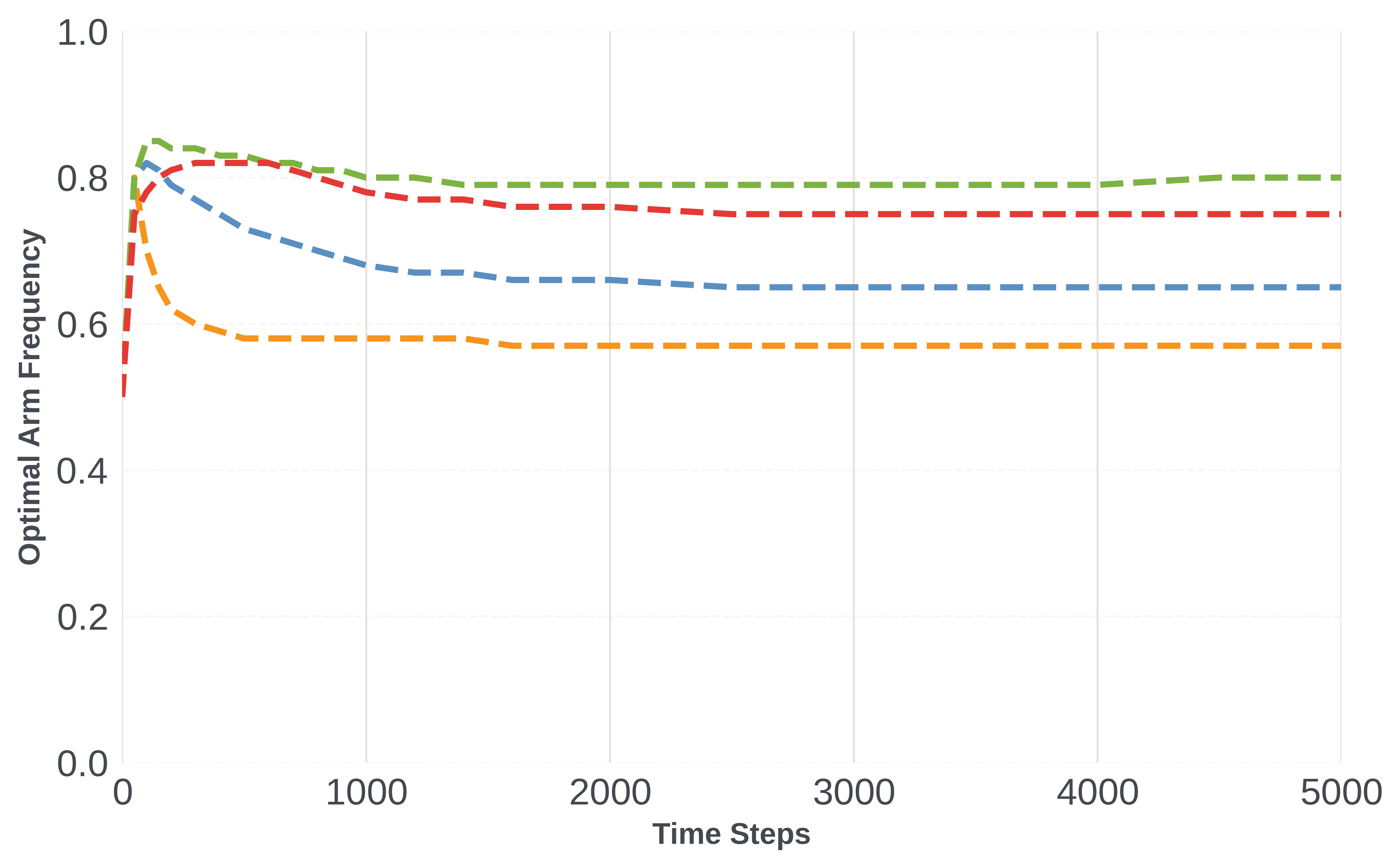}
        \caption{Reward noise, $\sigma=0.1$.}
        \label{fig:freq3}
    \end{subfigure}
    \hfill
    \begin{subfigure}[b]{0.48\linewidth}
        \centering
        \includegraphics[width=\linewidth]{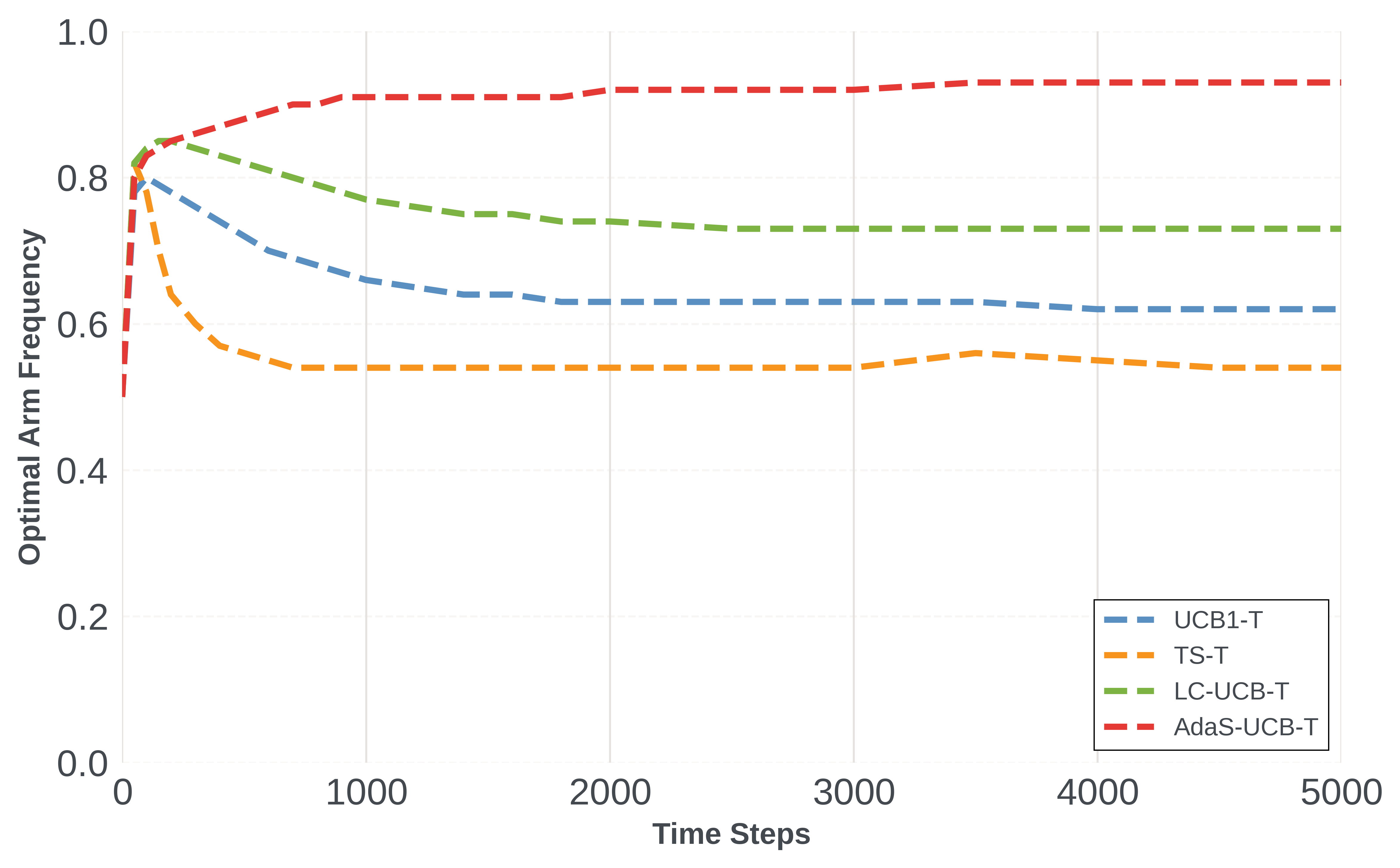}
        \caption{Number of states, $S=50$.}
        \label{fig:freq4}
    \end{subfigure}
    \Description[Frequency of pulling the optimal arm across time.]{
    Frequency with which each algorithm selects the oracle optimal arm across time under the default setting and three perturbed environments.
    }
    \caption{Frequency of pulling the optimal arm across time.}
    \label{fig:all_freq}
\end{figure}

\subsection{Hyperparameter Sensitivity}
\label{sec:sensitivity}

We sweep each adaptive-probing threshold across the default environment and three perturbations: $p_{\mathrm{stay}} = 0.5$, $\sigma = 0.1$, and $S = 50$. The uncertainty margin is stable: $m_{\mathrm{thresh}}=0.05$ is within $5\%$ of the per-environment best in all eight environment-algorithm pairs. The residual gate prefers $z_{\mathrm{thresh}}=2.5$ in all eight pairs, though the default $2.0$ is within $10\%$ in seven. Probe frequency matters more: $H=\lceil-\log(1-\delta_h)/\lambda_h\rceil=12$ is best in all eight pairs, and $\tau_{\min}=4$ in seven. Thus the defaults $(2.0,0.05,7,2)$ are conservative; raising $z_{\mathrm{thresh}}$, $H$, or $\tau_{\min}$ reduces excess probing, while lowering $H$ or $\tau_{\min}$ combats stale fingerprints.

\subsection{Component Ablation}
\label{sec:ablation}

We ablate the LinUCB features and schedule by comparing the full model, no-$\phi_{\mathrm{lag}}$, no-$\phi_{\mathrm{fp}}$, and fixed-schedule variants. In the sticky-state example ($p_{\mathrm{stay}}=0.95$), removing $\phi_{\mathrm{fp}}$ raises AdaSP regret from about $238$ to $374$, close to the fixed-schedule variant ($377$), so the fingerprint is the main gain. For randomized probing in the same regime, full AdaRP has regret $103$ versus $162$ for fixed-schedule RP-UCB, while dropping either context component gives about $108$; here adaptive timing is the main gain. In the fast-switching regime ($p_{\mathrm{stay}}=0.5$), AdaSP, no-$\phi_{\mathrm{fp}}$, and fixed-schedule AdaSP are nearly identical because roughly half of the sequential probe windows span a state transition.

\subsection{Robustness to Context Misspecification}
\label{sec:misspec}

To test for robustness, we compare a \emph{matched} learner that receives the correct sufficient statistic with a \emph{misspecified} learner that uses \eqref{eq:combined-features}; the gates and exploration parameters are unchanged, and $\sigma=0.05$. In a ten-state Markov chain, matched AdaRP has regret $111$ versus $189$ for misspecified AdaRP at $p_{\mathrm{stay}}=0.95$, and $91$ versus $580$ at $p_{\mathrm{stay}}=0.5$; misspecified AdaRP still beats SW-UCB ($392$, $635$) and D-UCB ($468$, $617$). Sequential probing fails faster under switching: at $p_{\mathrm{stay}}=0.5$, matched AdaSP has regret $189$, while misspecified AdaSP rises to $738$.

Two additional checks isolate memory mismatch and joint separation. When rewards depend on both previous and current states, misspecified AdaRP remains between matched AdaRP and non-stationary baselines: at $\rho=0.25$, the regrets are $92$, $145$, $298$, and $418$ for matched AdaRP, misspecified AdaRP, SW-UCB, and D-UCB; at $\rho=0.5$, they are $85$, $133$, $283$, and $400$. In an overlap sweep with small joint-separation gap $\delta=0.2$, the same ordering gives $67$, $128$, $384$, and $394$. Together, the sensitivity, ablation, overlap, and many-arm checks are consistent with the theory: thresholds control the probe-cost/staleness trade-off, feature ablations separate representation value from scheduling value, overlap controls fingerprint separation, and fast switching or larger sequential probes increase the mismatch term. AdaRP has low regret when fingerprints separate states and are refreshed often enough, while sequential probing also requires a low probability of state changes within a probe window.

\section{Conclusion}
This paper studies bandits in which an unobserved Markov state changes the reward ranking of the arms. Standard bandit baselines incur persistent dynamic regret in this setting because their estimates average over latent states. We propose algorithms that pass LinUCB a summary built from recent action-reward history and, when possible, probe rewards from multiple arms. The analysis separates three sources of regret: representation ambiguity, probe cost, and drift after a probe becomes stale. The experiments show gains when the summaries separate the states and remain current, and identify the main failure modes: high noise, weak fingerprint separation, and state changes during sequential probes. The intended regime is therefore one where the latent state is predictable over short horizons or refreshed fingerprints remain current; richer history summaries and finite-sample analysis for the full LinUCB instantiations are left for future work.


\bibliographystyle{ACM-Reference-Format}
\balance
\bibliography{reference}

\appendix

\section{Summary of Related Work}
We summarize the similarities and differences between our setting (in the ``Latent-State'' column) and existing settings in the literature in \Cref{tbl:comparison}.

\begin{table*}[!htbp]
\centering
\begin{adjustbox}{max width=\textwidth}
\begin{tabular}{lcccccccc}
\toprule
\textbf{Feature} & \textbf{Classical} & \textbf{Contextual} & \textbf{Non-Stat.} & \textbf{Adversarial} & \textbf{Restless} & \textbf{RL / POMDP} & \textbf{LMDP} & \textbf{Latent-State} \\
\midrule
\textbf{Stationarity} & Yes & Yes & Arbitrary drift & Worst-case & Markovian & Markovian & Episodic & Markovian \\
\textbf{State} & None & Observable & N/A & None & Observable & Hidden/Partial & Hidden & Hidden \\
\textbf{State Observability} & N/A & Full & N/A & N/A & Observable & Partial & Hidden & Unobservable \\
\textbf{Knowledge of Trans. Prob.} & N/A & N/A & N/A & N/A & Known & Unknown/Learned & Unknown & Not modeled \\
\textbf{Action Affects Transition} & No & No & No & No & Yes & Yes & No & No \\
\textbf{Trajectories Needed} & Single & Single & Single & Single & Single & Multiple & Multiple & Single \\
\textbf{Regret Type} & Static & Contextual & Dynamic & Fixed & Index-based & Cumulative & Episodic & Dynamic \\
\bottomrule
\end{tabular}
\end{adjustbox}
\caption{Comparison of related work}
\label{tbl:comparison}
\end{table*}

\section{Summary of Algorithms}
We present the detailed description of the algorithms compared in \Cref{tbl:algorithm_reference}.

\begin{table*}[!htbp]
\centering
\small
\begin{tabular}{p{\dimexpr 0.2\linewidth-2\tabcolsep}p{\dimexpr 0.15\linewidth-2\tabcolsep}p{\dimexpr 0.65\linewidth-2\tabcolsep}}
\toprule
\textbf{Algorithm} & \textbf{Type} & \textbf{Key Mechanism} \\
\midrule
\multicolumn{3}{l}{\textbf{\textit{Randomized Algorithms}}} \\
\midrule
\textbf{AdaRP-UCB} & Adaptive & LinUCB on synchronized probe fingerprints plus lagged context; residual, uncertainty, and hazard gates trigger probes. Assumes simultaneous units and benefits from fresh, separated fingerprints. \\
\addlinespace[0.2ex]
\textbf{RP-UCB} & Probing & Fixed-schedule synchronized probes every $\tau$ rounds; exploits with LinUCB on joint fingerprints and lagged context. Pays probe cost but no within-probe drift. \\
\midrule
\multicolumn{3}{l}{\textbf{\textit{Single-Unit Algorithms}}} \\
\midrule
\textbf{AdaSP-UCB} & Adaptive & Single-unit analog of AdaRP-UCB using consecutive-arm fingerprints and the same gates. Does not need simultaneous units, but pays sequential mismatch and staleness cost. \\
\addlinespace[0.2ex]
\textbf{LC-UCB} & Contextual & LinUCB using previous action and reward features. Works when a recent single-arm observation is informative about the current state. \\
\addlinespace[0.2ex]
\textbf{LC-TS} & Contextual & Thompson sampling with the same lagged action-reward features as LC-UCB. \\
\addlinespace[0.2ex]
\textbf{SP-UCB} & Probing & Fixed two-round sequential probing every $\tau$ steps, followed by LinUCB on synthetic fingerprints and lagged context. Sensitive to state changes during the probe. \\
\addlinespace[0.2ex]
\textbf{D-UCB} & Non-stationary & Discounted UCB with exponentially decaying arm statistics. \\
\addlinespace[0.2ex]
\textbf{SW-UCB} & Non-stationary & UCB over a fixed window of recent arm observations. \\
\addlinespace[0.2ex]
\textbf{EXP3-S} & Non-stationary & EXP3 variant with additive smoothing for changing environments. \\
\addlinespace[0.2ex]
\textbf{UCB1} & Classical & Standard UCB with global arm statistics and no latent-state adaptation. \\
\addlinespace[0.2ex]
\textbf{TS} & Classical & Thompson sampling with independent arm posteriors. \\
\addlinespace[0.2ex]
\textbf{EXP3} & Adversarial & Exponential weights with importance-weighted rewards and uniform exploration. \\
\addlinespace[0.2ex]
\midrule
\multicolumn{3}{l}{\textbf{\textit{Oracle Algorithms}}} \\
\midrule
\textbf{Optimal Single Arm} & Oracle & Best fixed arm under the stationary state distribution. \\
\bottomrule
\end{tabular}
\caption{Summary of Algorithms}
\label{tbl:algorithm_reference}
\end{table*}

\section{Proof of \texorpdfstring{\Cref{thm:tau}}{Theorem 4.2}}
\subsection{Representation bounds for hidden-state summaries}
\label{app:representation-bounds}

This subsection proves the representation and state-separation claims in the discussion following \Cref{thm:tau}. These results
justify when lagged contexts and probe fingerprints give small probe-time
identification error. The probe-scheduling bounds themselves are proved in
\Cref{app:probe-schedule-proof}.

\paragraph{Representation loss.}
For any history statistic $Z_t$, let $\pi_t^Z(Z_t)$ be the best arm rule measurable with respect to $Z_t$. Adding and subtracting $\mu_{s_t,\pi_t^Z(Z_t)}$ gives the exact decomposition
\[
R_T(A)=\mathrm{App}_T(Z)+\mathrm{Reg}_T^Z(A).
\]
Let $e_t(Z)=\inf_d\mathbb P(d(Z_t)\ne s_t)$ be the Bayes state-decoding error. For any decoder $d$, the decoded-state rule $a^\ast_{d(Z_t)}$ is $Z_t$-measurable. Since $\pi_t^Z$ is the best $Z_t$-measurable rule in expected reward, its expected loss is no larger than that of this decoded-state rule. Thus
\[
\begin{aligned}
&\mathbb E\!\left[\mu_{s_t,a^\ast_{s_t}}-\mu_{s_t,\pi_t^Z(Z_t)}\right] \\
&\quad \le \Delta_{\max}\mathbb P(d(Z_t)\ne s_t).
\end{aligned}
\]
Taking the infimum over $d$ and summing over $t$ yields $\mathrm{App}_T(Z)\le \Delta_{\max}\sum_{t=1}^T e_t(Z)$.

For lagged context, define $\beta_1=\sup_s(1-\max_u P_{su})$ and $\gamma_a=\min_{s\ne s'}|\mu_{s,a}-\mu_{s',a}|$. Consider the decoder that estimates $s_{t-1}$ from $(a_{t-1},r_{t-1})$ by nearest mean and then predicts $s_t$ by the most likely successor of the estimated previous state. Conditional on $a_{t-1}=a$ and $s_{t-1}=s$, a nearest-mean error in favor of a fixed $s'\ne s$ requires
\[
    \eta_{t-1}(\mu_{s',a}-\mu_{s,a})
    \ge \frac{(\mu_{s',a}-\mu_{s,a})^2}{2}.
\]
The left-hand side is $\sigma|\mu_{s',a}-\mu_{s,a}|$-sub-Gaussian, so this event has probability at most $\exp(-\gamma_a^2/(8\sigma^2))$. Union bounding over $s'\ne s$ and then over the realized previous arm gives
\[
 e_t(a_{t-1},r_{t-1})
 \le
 \beta_1+
 \sum_{a\in\mathcal A}\mathbb P(a_{t-1}=a)(S-1)
 \exp\!\left(-\frac{\gamma_a^2}{8\sigma^2}\right).
\]
The $\beta_1$ term is the error of predicting $s_t$ from a correctly decoded $s_{t-1}$.

For synchronous probing over arms $B$, let $m_s^B=(\mu_{s,a})_{a\in B}$ and $\Gamma_B=\min_{s\ne s'}\|m_s^B-m_{s'}^B\|_2$. Suppose the fingerprint observation is $F_t^B=m_{s_t}^B+\xi_t$, where the coordinates of $\xi_t$ are independent and $\sigma$-sub-Gaussian. For a fixed $s'\ne s_t$, nearest-centroid decoding prefers $s'$ to $s_t$ only if
\[
2\langle \xi_t,m_{s'}^B-m_{s_t}^B\rangle
\ge \|m_{s'}^B-m_{s_t}^B\|_2^2.
\]
The projection is $\sigma\|m_{s'}^B-m_{s_t}^B\|_2$-sub-Gaussian, so a one-sided tail bound and a union bound over $s'\ne s_t$ give
\[
 e_t(F_t^B)
 \le (S-1)\exp\!\left(-\frac{\Gamma_B^2}{8\sigma^2}\right).
\]
For an $m$-arm sequential fingerprint $\widetilde F_t^B$, let $\omega_m$ be the probability that the hidden state changes during the probe window. On the no-change event the same nearest-centroid argument applies, while on the change event we use the trivial error bound one. Thus
\[
 e_t(\widetilde F_t^B)
 \le
 \omega_m+(S-1)\exp\!\left(-\frac{\Gamma_B^2}{8\sigma^2}\right).
\]
Under the sticky-state proxy in \Cref{thm:tau}, a union bound over the $m-1$ within-window transitions gives $\omega_m\le (m-1)q$.

\subsection{Proofs of the probe-scheduling bounds}
\label{app:probe-schedule-proof}

\begin{proof}[Proof of \Cref{thm:probe-episode}]
For episode $m$, let $E_m=\{\hat a_m^\ast\ne a^\ast_{s_{c_m}}\}$ denote probe-time optimal-arm error at the completion time $c_m$. For exploit lag $j$, let $D_{m,j}=\{a^\ast_{s_{c_m+j}}\ne a^\ast_{s_{c_m}}\}$ denote optimal-arm drift. If $E_m^c\cap D_{m,j}^c$ holds, then $\hat a_m^\ast=a^\ast_{s_{c_m+j}}$, and the lag-$j$ exploit regret is zero. Otherwise, it is at most $\Delta_{\max}$. Therefore
\[
\Delta_{s_{c_m+j},\hat a_m^\ast}
\le \Delta_{\max}\big(\mathbf{1}\{E_m\}+\mathbf{1}\{D_{m,j}\}\big).
\]
Conditioning on $\mathcal H_{m,j}$ and applying the theorem assumptions gives
\[
\mathbb E\!\left[\Delta_{s_{c_m+j},\hat a_m^\ast}\mid \mathcal H_{m,j}\right]
\le \Delta_{\max}(\varepsilon_m+\nu_j).
\]
Summing this inequality over $j=1,\ldots,L_m-1$ and over $m=1,\ldots,M$, adding the probe-block costs $C_m$, and bounding each leftover round in $\mathcal U$ by $\Delta_{\max}$ proves the claim.
\end{proof}

\begin{proof}[Proof of \Cref{thm:tau}]
A change in the optimal arm over $j$ steps can occur only if at least one of the $j$ Markov transitions changes state. The union bound therefore gives
\[
\mathbb P(a^\ast_{s_{t+j}}\ne a^\ast_{s_t}\mid s_t=s)
\le \sum_{\ell=0}^{j-1} q
= jq.
\]
For complete periodic cycles, apply \Cref{thm:probe-episode} with $L_m=\tau$, $C_m\le\Delta_{\mathrm{probe}}$, and $\varepsilon_m\le\varepsilon_{\mathrm{fp}}$. The average contribution per complete cycle is bounded by
\[
\begin{aligned}
&\Delta_{\mathrm{probe}}
+\Delta_{\max}\sum_{j=1}^{\tau-1}(\varepsilon_{\mathrm{fp}}+jq) \\
&\quad\le
\Delta_{\mathrm{probe}}
+\Delta_{\max}(\tau-1)\varepsilon_{\mathrm{fp}}
+\Delta_{\max}q\frac{\tau(\tau-1)}{2}.
\end{aligned}
\]
Dividing by $\tau$ and using $(\tau-1)/\tau\le1$ yields the three leading terms in \Cref{thm:tau}. If $T$ is not an exact multiple of $\tau$, the endpoint fragment has length at most $\tau$ and contributes the stated lower-order term. The leading terms are minimized at the stated order.
\end{proof}

\begin{corollary}[Hazard-gated schedules]
\label{cor:appendix-hazard}
For a hazard-gated adaptive schedule, define
\[
H_h=\left\lceil\frac{\log(1/(1-\delta_h))}{\lambda_h}\right\rceil.
\]
Suppose each completed-probe episode reuses its fingerprint for at most $H-1$ exploit rounds, each probe block has expected regret at most $c_{\mathrm{probe}}$, and each completed probe has optimal-arm error at most $\bar\varepsilon$. Let
\[
\bar\nu_H=\max_{1\le j\le H-1}\nu_j,
\qquad
\rho_T=\mathbb E[M]/T.
\]
Then
\[
    \frac{1}{T}\mathbb E[R_T]
    \le c_{\mathrm{probe}}\rho_T+
    \Delta_{\max}(\bar\varepsilon+\bar\nu_H)+
    \Delta_{\max}\frac{\mathbb E|\mathcal U|}{T}.
\]
For same-round randomized probing, take $H_{\rm rnd}=\max\{\tau_{\min},H_h\}$. For an $m$-arm sequential probe, use $H_{\rm seq}=H_{\rm rnd}+m-1$, which accounts for probe-window completion.
\end{corollary}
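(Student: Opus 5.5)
This corollary follows by specializing \Cref{thm:probe-episode} and then counting. The plan is to apply \eqref{eq:probe-episode} with the per-episode quantities supplied by the hypotheses: $C_m\le c_{\mathrm{probe}}$, $\varepsilon_m\le\bar\varepsilon$, and $L_m-1\le H-1$. For the drift, every reached lag satisfies $j\le H-1$, so $\nu_j\le\bar\nu_H$. Substituting these bounds, the first term of \eqref{eq:probe-episode} is at most $c_{\mathrm{probe}}\,\mathbb E[M]$, and the leftover term $\Delta_{\max}\mathbb E|\mathcal U|$ passes through unchanged. The middle term becomes
\[
\Delta_{\max}(\bar\varepsilon+\bar\nu_H)\,\mathbb E\Big[\sum_{m=1}^M (L_m-1)\Big].
\]

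The key counting step is to bound the total number of exploit rounds. The episodes' exploit rounds are disjoint subsets of $\{1,\dots,T\}$, so $\sum_m (L_m-1)\le T$ pathwise. This uses the count of exploit rounds itself and does not multiply $M$ by $H$, which is why the bound carries no $H$ factor in front of $\bar\varepsilon$. Dividing everything by $T$ and writing $\rho_T=\mathbb E[M]/T$ gives the displayed inequality.

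The step I expect to need care is justifying that the episode length cap $H$ really holds for the hazard-gated scheduler, which the last two sentences of the corollary assert. For same-round randomized probing, the gate fires once $1-\exp(-\lambda_h(t-t_{\mathrm{probe}}))\ge\delta_h$, i.e.\ once $t-t_{\mathrm{probe}}\ge\log(1/(1-\delta_h))/\lambda_h$. Since $t-t_{\mathrm{probe}}$ is an integer, this happens at the latest when $t-t_{\mathrm{probe}}=H_h$. The probe also requires $t-t_{\mathrm{probe}}\ge\tau_{\min}$, so a new probe occurs within $\max\{\tau_{\min},H_h\}=H_{\rm rnd}$ rounds. A probe starting at $t_{\mathrm{probe}}$ therefore completes at $c_m=t_{\mathrm{probe}}$ and is followed by at most $H_{\rm rnd}-1$ exploit rounds. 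Earlier firings of the residual or uncertainty gates only shorten episodes, and the cap is all we use.

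For an $m$-arm sequential probe, the timestamp is set at the probe's first round, but completion happens $m-1$ rounds later. The fingerprint is then reused until the next probe begins, at most $H_{\rm rnd}$ rounds after the previous start. Measured from the completion time $c_m$, the drift therefore spans lags up to $H_{\rm rnd}+m-1$. Taking $H_{\rm seq}=H_{\rm rnd}+m-1$ is a safe upper bound: using the larger value only enlarges the range over which $\bar\nu_H$ is maximized, so the bound stays valid. Under the sticky proxy of \Cref{thm:tau}, one can further plug in $\bar\nu_H\le(H-1)q$, though the statement does not require this.
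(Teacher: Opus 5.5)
Your proposal is correct and follows the paper's proof. As the paper does, you specialize \Cref{thm:probe-episode} with $C_m\le c_{\mathrm{probe}}$, $\varepsilon_m\le\bar\varepsilon$ and $L_m\le H$, bound the total number of exploit rounds by $T$ (a step you make explicit where the paper leaves it implicit), and read the cap $H$ off the hazard threshold together with the minimum gap.

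There is one small slip in the sequential case. Measured from $c_m=t_{\mathrm{probe}}+m-1$, the next probe starts by $t_{\mathrm{probe}}+\max\{H_{\mathrm{rnd}},m\}$, so the exploit lags reach only $\max\{H_{\mathrm{rnd}}-m,0\}$, not $H_{\mathrm{rnd}}+m-1$. Taken literally, your count of $H_{\mathrm{rnd}}+m-1$ would not fit under the allowed $H_{\mathrm{seq}}-1$, which contradicts your own claim that $H_{\mathrm{seq}}$ is safe. The choice $H_{\mathrm{seq}}=H_{\mathrm{rnd}}+m-1$ is still valid, simply because it overestimates the true cap.
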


\begin{proof}
The hazard gate satisfies $1-\exp(-\lambda_h a)\ge\delta_h$ exactly when $a\ge H_h$, up to integer rounding. Together with the minimum gap, this gives the stated deterministic upper bound $H$ on the number of post-probe exploit lags for which a completed fingerprint can be reused.

Apply \Cref{thm:probe-episode} with $C_m\le c_{\mathrm{probe}}$, $\varepsilon_m\le\bar\varepsilon$, and $L_m\le H$. Since $\bar\nu_H$ upper-bounds every drift term among the possible exploit lags, the normalized exploit contribution is at most $\Delta_{\max}(\bar\varepsilon+\bar\nu_H)$. The expected number of probe episodes is $\mathbb E[M]=\rho_TT$, so the normalized probe cost is $c_{\mathrm{probe}}\rho_T$.

Adding the normalized leftover term proves the result. The sequential value of $H$ adds $m-1$ rounds because a sequential fingerprint is available only after its probe window completes.
\end{proof}

\end{document}